\theoremstyle{plain}
\newtheorem{theorem}{Theorem}
\newtheorem{proposition}{Proposition}
\newtheorem{lemma}{Lemma}
\theoremstyle{definition}
\newtheorem{condition}{Condition}
\newcommand{\op}{\mathrm{op}}
\theoremstyle{remark}
\newtheorem{remark}{Remark}
\title{\textbf{When Does Learning Renormalize? Sufficient Conditions for Power–Law Spectral Dynamics}}
\author[1]{Yizhou Zhang\footnote{Corresponding Author}}
\affil[1]{zyizhou96@gmail.com}
\date{}
\begin{document}
\maketitle

\begin{abstract}
Empirical power–law scaling has been widely observed across modern deep learning systems, yet its theoretical origins and scope of validity remain incompletely understood. The Generalized Resolution–Shell Dynamics (GRSD) framework models learning as spectral energy transport across logarithmic resolution shells, providing a coarse–grained dynamical description of training. Within GRSD, power–law scaling corresponds to a particularly simple renormalized shell dynamics; however, such behavior is not automatic and requires additional structural properties of the learning process.

In this work, we identify a set of sufficient conditions under which the GRSD shell dynamics admits a renormalizable coarse–grained description. These conditions constrain the learning configuration at multiple levels, including boundedness of gradient propagation in the computation graph, weak functional incoherence at initialization, controlled Jacobian evolution along training, and log–shift invariance of renormalized shell couplings. We further show that power–law scaling does not follow from renormalizability alone, but instead arises as a rigidity consequence: once log–shift invariance is combined with the intrinsic time–rescaling covariance of gradient flow, the renormalized GRSD velocity field is forced into a power–law form.

Beyond the theoretical analysis, we provide direct empirical evidence that the key structural requirement—log–shift invariance of renormalized shell couplings—is approximately realized in modern residual architectures, and is significantly degraded in non–residual counterparts. These experiments validate the non–vacuous nature of the sufficient conditions and clarify the architectural mechanisms that promote renormalizable spectral transport.

We emphasize that the conditions identified here are sufficient but not necessary, and we do not claim universality of power–law scaling across all deep learning settings. Rather, our results delineate a class of learning dynamics in which renormalizable shell transport and power–law behavior can be derived from structural and dynamical principles. By framing neural scaling laws within a renormalization–theoretic perspective, this work provides a principled foundation for understanding when and why power–law scaling should be expected to hold, as well as when it may break down.
\end{abstract}

\section{Introduction}

Empirical power--law scaling has emerged as one of the most robust and
reproducible phenomena in modern deep learning.
Across model families, tasks, and training regimes, performance metrics
such as loss, error, or perplexity often exhibit smooth power--law
dependence on model size, dataset size, or compute budget
\citep{hestness2017deep,kaplan2020scaling,henighan2020scaling,hoffmann2022training}.
More recently, scaling behavior has been observed to persist beyond
classical regimes, extending to precision, data pruning, and
architectural interventions
\citep{sorscher2022beyond,kumar2024scaling}.
Despite its empirical ubiquity, the theoretical origins of these power
laws remain only partially understood.

A growing body of work has sought to explain neural scaling laws through
kernel limits and mean--field approximations
\citep{jacot2018neural,lee2019wide,yang2021tensor},
as well as through dynamical models that interpolate between lazy and
feature--learning regimes
\citep{bordelon2024dynamical,bordelon2023feature,domine2024lazy}.
While these approaches provide valuable insight into specific limits or
architectures, they do not by themselves explain why power--law behavior
arises so broadly, nor under what conditions it should be expected to
fail.
In particular, empirical evidence clearly indicates that scaling laws do
not hold universally: depending on architecture, optimization stability,
data distribution, or training regime, scaling behavior may degrade,
break, or transition between distinct regimes
\citep{sorscher2022beyond,keskar2017large,ghorbani2019investigation}.

In recent work, the Generalized Resolution--Shell Dynamics (GRSD)
framework was proposed as a spectral and operator--theoretic description
of learning dynamics in deep networks \citep{zhang2025generalized}.
GRSD formulates learning as an energy transport process across
logarithmic spectral shells, leading to a coarse--grained,
one--dimensional conservation law for error energy.
Within this framework, empirical neural scaling laws correspond to a
particularly simple form of the renormalized shell velocity: a power--law
function of the spectral coordinate.
However, in GRSD this power--law form is not automatic.
Rather, it reflects a nontrivial structural property of the learning
process—namely, that the shell dynamics admits a renormalizable closure
in the sense familiar from statistical physics and turbulence theory
\citep{kadanoff1966scaling,wilson1983renormalization,kolmogorov1995turbulence,forster1977large,spohn2012large}.

The central question addressed in this paper is therefore the following:
\emph{under what conditions does the GRSD shell dynamics admit a
power--law renormalized description?}
Our goal is not to claim that all deep learning systems satisfy such
conditions, nor that power--law behavior is universal.
Indeed, the empirical literature suggests the opposite: scaling laws are
contingent and can fail outside specific regimes.
Instead, we seek to identify a coherent and mathematically well--defined
set of \emph{sufficient conditions} under which renormalizable shell
dynamics can be established, and under which power--law behavior follows
as a rigidity consequence.

A central difficulty in this program is that some of the required
conditions—most notably log--shift invariance of renormalized shell
couplings—are genuinely structural and do not follow from generic
stability or locality considerations alone.
Such conditions would be of limited interest if they were never even
approximately realized in practical learning systems.
For this reason, an essential part of our analysis is to identify
concrete architectural mechanisms that promote these properties, and to
empirically test whether they are approximately satisfied in realistic
training settings.

Concretely, we study learning configurations defined by a network
architecture, an initialization scheme, and an optimization trajectory.
We provide a theorem establishing that the GRSD shell dynamics admits a
power--law velocity field when a collection of structural and dynamical
conditions are satisfied.
These conditions include
(i) locality in the gradient computation graph,
(ii) weak functional incoherence at initialization,
(iii) controlled evolution of the Jacobian along training, and
(iv) log--shift invariance of the renormalized shell couplings.
Crucially, power--law scaling does not follow from renormalizability or
log--shift invariance alone; rather, it emerges from a rigidity
mechanism once these properties are combined with the intrinsic
time--rescaling covariance of gradient flow.

At first sight, the sufficient conditions identified in this work may
appear strong or even restrictive.
Indeed, they impose nontrivial requirements on architectural structure,
initialization geometry, and training stability.
However, it is precisely these requirements that align strikingly well
with several core design principles of modern deep learning systems.

Contemporary architectures are overwhelmingly engineered to avoid
uncontrolled recurrence or long--range instantaneous coupling, favoring
feedforward or residual structures with bounded gradient propagation.
Training pipelines are carefully designed to ensure stability and
controllability, with explicit emphasis on preventing gradient explosion
or catastrophic spectral reorganization.
Likewise, random or weakly correlated initialization schemes are
ubiquitous, reflecting an implicit preference for functional
incoherence at the start of training.
Viewed through the GRSD lens, these widely adopted engineering choices
correspond closely to Conditions~\ref{cond:banded-jacobian}--\ref{cond:controlled-path}, which enforce locality, stability,
and statistical regularity of the induced operator dynamics.

From this perspective, the emergence of neural scaling laws should not
be interpreted as an intrinsic or inevitable property of deep learning
models.
Rather, it reflects the fact that engineering requirements of
trainability, stability, and scalability progressively constrain
learning systems toward a dynamical regime with no intrinsic spectral
scale and a renormalizable coarse--grained description.
Once learning dynamics enters this approximately scale--free and
renormalizable region, power--law behavior is no longer a matter of
modeling choice or empirical coincidence, but instead arises as a
rigidity consequence of the underlying dynamics.

In addition to the theoretical analysis, we empirically probe the
spectral transport operators induced during training and directly
measure the resulting coarse--grained shell couplings.
These experiments demonstrate that residual architectures approximately
realize the log--shift invariance required by the theory, while
structurally similar non--residual architectures do not.
This contrast provides direct evidence that the sufficient conditions
identified here capture meaningful architectural distinctions, rather
than abstract or vacuous assumptions.

Importantly, the conditions identified here are sufficient but not
necessary.
We do not exclude the possibility that other mechanisms—such as
stochastic optimization effects or alternative sources of effective
regularization—may also lead to scaling behavior.
Rather, our objective is to construct a principled framework in which
renormalizability and power--law scaling can be derived from structural
and dynamical assumptions, rather than postulated a priori.

A key feature of our analysis is that the sufficient conditions are
expressed at the level of operator dynamics and gradient evolution,
rather than being tied to a specific model family.
As a result, they can be meaningfully interpreted across a range of
modern architectures—including multilayer perceptrons, convolutional
networks, transformers, and structured state--space models—provided
these architectures satisfy the stated assumptions.
We emphasize that this correspondence is not universal: the presence of
architectural features aligned with our conditions does not imply that
scaling must occur, but rather that scaling is structurally permitted
within the GRSD framework.

From a broader perspective, our results position neural scaling laws
within the classical theory of renormalization and large--scale
dynamics.
Just as power laws in turbulence or critical phenomena arise when
microscopic details become irrelevant under coarse--graining, GRSD
predicts power--law learning dynamics when the learning process itself
admits a renormalizable and log--shift invariant closure compatible with
gradient--flow covariance.
Viewed through this lens, the observed success—and occasional
failure—of scaling laws in deep learning reflects whether the underlying
learning configuration lies within a renormalizable universality class.

The remainder of the paper is organized as follows.
Section~\ref{sec:grsd-background} reviews the GRSD framework and the
formulation of shell dynamics.
Section~\ref{sec:sufficient-conditions} states the sufficient conditions
for renormalizable shell dynamics and presents the main theorem.
Section~\ref{sec:residual-condition} provides a concrete structural
mechanism—based on residual learning—by which the most nontrivial
condition can be realized.
Section~\ref{sec:interpretation} interprets these conditions and
discusses their relationship to common deep learning design choices.
Section~\ref{sec:experiments} presents empirical validation of the
structural assumptions underlying the theory.
Finally, Section~\ref{sec:discussion} discusses limitations,
non--universal regimes, and directions for future work.

\section{Generalized Resolution--Shell Dynamics}
\label{sec:grsd-background}

In this section we briefly review the Generalized Resolution--Shell Dynamics (GRSD) framework, which provides the dynamical and spectral setting for the results of this paper.
Our presentation is intentionally concise and focuses only on the aspects of GRSD that are required to formulate and analyze power--law renormalizability.
We refer to \citep{zhang2025operator} for a detailed and comprehensive treatment.

\subsection{Spectral shells and shell energies}
\label{sec:grsd-shells}

Consider a supervised learning problem with model parameters $\theta(t)$ trained by gradient-based optimization.
Let $J(t)$ denote the Jacobian of the model outputs with respect to parameters, viewed as a linear operator from parameter space to function space.
We define the associated positive semidefinite operator
\begin{equation}
M(t) := J(t) J(t)^{\ast},
\end{equation}
which plays the role of a time-dependent kernel governing error dynamics.
Operators of this form have been extensively studied in learning theory and operator-based analyses of neural networks
\citep{rosasco2010learning,koltchinskii2017normal}.

GRSD analyzes learning dynamics through the spectral decomposition of $M(t)$.
Let $\lambda$ denote the spectral coordinate of $M(t)$, and introduce logarithmic spectral shells
\[
S_\alpha := \{\lambda : s_\alpha \le \log \lambda < s_{\alpha+1}\},
\]
where $\{s_\alpha\}$ is a uniform partition of the log-spectrum.
For each shell $S_\alpha$, GRSD defines a shell energy $E_\alpha(t)$ corresponding to the error energy carried by modes within that spectral band.

Passing to a shell-averaged continuum description yields an energy density $\varepsilon(\lambda,t)$, defined as the piecewise-constant interpolation of $E_\alpha(t)$ over logarithmic shells.
This representation enables a coarse-grained description of learning dynamics in which microscopic spectral details are suppressed while large-scale spectral structure is retained.

\subsection{Renormalized shell dynamics}
\label{sec:grsd-renormalization}

A central result of GRSD is that, under mild structural assumptions, the shell energies satisfy an approximate one-dimensional balance law of conservative form.
Specifically, the shell-averaged energy density obeys
\begin{equation}
\label{eq:grsd-conservation}
\partial_t \varepsilon(\lambda,t) + \partial_\lambda J(\lambda,t)
= -D(\lambda,t),
\end{equation}
where $J(\lambda,t)$ is a spectral flux density describing energy transport across scales, and $D(\lambda,t)$ collects dissipative contributions.
This equation is the learning-theoretic analogue of energy cascade equations in turbulence and large-scale interacting systems
\citep{kolmogorov1995turbulence,forster1977large,spohn2012large}.

The GRSD formulation does not, by itself, specify the functional form of the flux $J(\lambda,t)$.
Instead, it expresses learning as a transport process whose qualitative behavior depends on how $J$ relates to the local energy density $\varepsilon$.
A particularly simple and analytically tractable regime arises when the dynamics admits a \emph{renormalized velocity field}
\begin{equation}
\label{eq:grsd-velocity}
v(\lambda,t) := \frac{J(\lambda,t)}{\varepsilon(\lambda,t)},
\end{equation}
which depends on $\lambda$ through a power law.
In this case, the shell dynamics becomes renormalizable in the classical sense: under spectral coarse-graining, the functional form of the evolution equation is preserved up to rescaling.

Importantly, such power-law renormalizability is \emph{not automatic}.
The GRSD framework allows for general, scale-dependent fluxes that may involve multiple characteristic scales or exhibit broken scaling behavior.
Consequently, the appearance of power-law velocity fields reflects additional structural properties of the learning configuration rather than a generic consequence of spectral coarse-graining.

The goal of this paper is to make this distinction explicit.
Rather than postulating power-law renormalizability as an assumption, we ask under what conditions on the learning configuration—encompassing architecture, initialization, optimization stability, and statistical scale relations—the GRSD shell dynamics necessarily admits a power-law renormalized description.
The next section formalizes this question and presents a set of sufficient conditions under which such renormalizability can be rigorously established.

\section{Sufficient Conditions for Power--Law Renormalizability}
\label{sec:sufficient-conditions}

This section formulates a set of sufficient conditions under which the GRSD shell dynamics admits a power--law renormalized description.
We begin by formalizing the notion of a learning configuration and its associated Jacobian dynamics.
We then state the sufficient conditions and present the main theorem.
Interpretation and architectural implications are deferred to Section~\ref{sec:interpretation}.

\subsection{Learning configurations and Jacobian dynamics}
\label{sec:learning-config}

We consider supervised learning dynamics parameterized by time $t \in [0,T]$, induced by a learning configuration consisting of:
(i) a network architecture specifying a parameter-to-function map,
(ii) an initialization scheme for the parameters,
and (iii) an optimization trajectory generated by gradient-based training.

Let $J(t)$ denote the Jacobian of the model outputs with respect to parameters, viewed as a linear operator from parameter space to function space.
As in GRSD, we define the associated positive semidefinite operator
\begin{equation}
M(t) := J(t) J(t)^{\ast},
\end{equation}
which governs the instantaneous learning dynamics in function space.
Operator evolutions of this form arise naturally in both kernel limits and feature-learning regimes
\citep{jacot2018neural,lee2019wide,bordelon2024dynamical,zhang2025operator}.

To capture structural locality in the gradient computation graph, we decompose the Jacobian into a collection of blocks,
\begin{equation}
J(t) = \bigl(J^{(1)}(t), J^{(2)}(t), \dots, J^{(L)}(t)\bigr),
\end{equation}
where each block corresponds to a contiguous subgraph of the computation graph (e.g., layers, modules, or residual units).
This decomposition is assumed to be fixed throughout training and serves only as a bookkeeping device for expressing locality and coherence properties.
No assumption is made that the blocks correspond to distinct spectral shells.

The evolution of $J(t)$ along the optimization trajectory induces an evolution of $M(t)$ and, through its spectrum, the GRSD shell dynamics reviewed in Section~\ref{sec:grsd-background}.
Our goal is to characterize conditions under which this induced shell dynamics is renormalizable with a power--law velocity field.

\subsection{Statement of sufficient conditions}
\label{sec:conditions}

We now state a set of sufficient conditions under which power--law renormalizability
of GRSD shell dynamics can be rigorously derived.
These conditions constrain the learning configuration at the level of computation
graph structure, initialization geometry, training stability, and scale consistency.
They are not claimed to be necessary, nor are they asserted to hold universally across
all deep learning settings.

\begin{condition}[Graph--banded Jacobian evolution]
\label{cond:banded-jacobian}
There exists a constant $K = O(1)$ such that for each block index $l$ and all $t \in [0,T]$,
\[
\dot J^{(l)}(t) \in \mathrm{span}\bigl\{ J^{(m)}(t) : |m-l| \le K \bigr\}.
\]
\end{condition}

Condition~\ref{cond:banded-jacobian} expresses locality of gradient propagation in the
computation graph.
It allows for recurrent or cyclic dependencies provided that their influence remains
confined to a bounded neighborhood and does not induce long-range instantaneous coupling
across blocks.

\begin{condition}[Initial functional incoherence]
\label{cond:init-incoherence}
There exists a nonnegative sequence $\{\varepsilon_k\}_{k \ge 1}$ with
$\sum_{k \ge 1} \varepsilon_k < \infty$ such that
\[
\bigl\| J^{(l)}(0)^{\ast} J^{(m)}(0) \bigr\|_{\mathrm{op}}
\;\le\;
\varepsilon_{|l-m|}
\qquad
\text{for all } l,m .
\]
\end{condition}

Condition~\ref{cond:init-incoherence} requires that long-range correlations between
Jacobian blocks at initialization are summable.
It does not preclude structured local dependencies or short-range correlations,
and is naturally satisfied by a wide class of random or weakly correlated initializations.

\begin{condition}[Controlled Jacobian path and statistical regularity]
\label{cond:controlled-path}
There exists a constant $C_J < \infty$ such that
\[
\sup_{t \in [0,T]}
\Bigl(
\|J(t)\|_{\mathrm{op}} + \|\dot J(t)\|_{\mathrm{op}}
\Bigr)
\;\le\;
C_J .
\]
Moreover, on any intermediate spectral window away from the extreme edges,
the Jacobian-induced operators and their spectral projectors evolve without
abrupt reorganization: all quadratic statistics entering the GRSD closure
remain uniformly bounded, admit Lipschitz dependence in logarithmic spectral
coordinates, and their shell-wise averages concentrate around their expectations
at rates compatible with the width and depth scaling limits.
\end{condition}

Condition~\ref{cond:controlled-path} enforces stability and regularity of the learning
trajectory.
Beyond uniform operator norm control, it rules out catastrophic spectral
rearrangements and ensures that the statistical quantities required for
renormalized shell closure are well-defined and self-averaging along the
optimization path. 

Assumptions of this type—uniformly bounded Jacobians/gradients along the optimization trajectory, sometimes supplemented by smoothness controls—are standard in analyses of overparameterized learning and trainability; see, e.g., bounded-Jacobian conditions in \cite{terjek2022framework} and Jacobian-spectrum stability via dynamical isometry in \cite{pennington2017resurrecting,tarnowski2019dynamical}.

\begin{condition}[Log--shift invariance of renormalized shell couplings]
\label{cond:log-shift-invariance}

Let $s=\log\lambda$, and let $\{B_i\}$ denote logarithmic spectral shells of
fixed width $h$ in $s$, centered at $s_i=ih$.
Let $\widehat{\Omega}_{ij}$ denote the shell-level renormalized coupling,
obtained by coarse-graining the mode-level operator $\dot M$ over shells
$B_i$ and $B_j$.

On an intermediate spectral window, the shell-level statistics satisfy Log--shift invariance, i.e. 
$\widehat{\Omega}_{ij}$ follows scale-free structure:
\[
\widehat{\Omega}_{ij}(t) =
\begin{cases}
\mathcal{K}_h\!\left((j-i)h\right) + \mathrm{err}_{ij},
& i\neq j,\\[1ex]
c\lambda_i + \mathrm{err}_{ii},
& i=j,
\end{cases}
\]
where $\mathcal{K}_h$ depends only on the relative log-scale separation,
and the residual terms $\mathrm{err}_{ij}$ vanish in the joint limit of large
model size and fine shell resolution. 

\end{condition}

Condition~\ref{cond:log-shift-invariance} requires the absence of any intrinsic absolute
scale in the renormalized spectral couplings beyond relative separations in
logarithmic scale. It forces that $\dot s = \frac{\dot\lambda}{\lambda}$ is invariant to the absolute scale of $\lambda$ (detailed proof provided in \ref{app:scale_invariant_sdot}).
Unlike Conditions~\ref{cond:banded-jacobian}--\ref{cond:controlled-path}, this condition
does not follow from generic stability or locality considerations and constitutes
a genuinely nontrivial structural requirement.

We emphasize that Condition~\ref{cond:log-shift-invariance} is not intended as a verifiable microscopic assumption with necessary and sufficient criteria. Rather, it should be interpreted as an effective principle characterizing learning configurations that operate near a renormalization fixed point in the GRSD sense. In practice, such configurations may be approached through architectural design (e.g., residual parameterizations), optimization choices, data preprocessing, and hyperparameter tuning, without being exactly realized. In this sense, Condition~\ref{cond:log-shift-invariance} plays a role analogous to idealized principles such as reversibility in Carnot engines or scale invariance at criticality: it defines a limiting structure that constrains admissible large-scale dynamics when approximately satisfied.

Together, Conditions~\ref{cond:banded-jacobian}--\ref{cond:log-shift-invariance}
ensure that GRSD shell dynamics admits a closed, scale-consistent, and renormalizable
description under logarithmic spectral coarse-graining.

\subsection{Main theorem}
\label{sec:main-theorem}

We are now ready to state the main result of this paper.

\begin{theorem}[Power--law renormalizability of GRSD shell dynamics]
\label{thm:power-law-renormalizability}
Fix a learning configuration and consider the induced GRSD shell dynamics defined on logarithmic spectral shells.
Suppose Conditions~\ref{cond:banded-jacobian}--\ref{cond:log-shift-invariance} hold on a training horizon $t \in [t_0,T]$.
Assume in addition the standard GRSD structural property that intra--shell couplings are antisymmetric, so that shell--internal transfers cancel in the energy balance.

Then the renormalized GRSD velocity field
\[
v(\lambda,t) := \frac{J(\lambda,t)}{\varepsilon(\lambda,t)}
\]
is uniquely constrained to the form
\[
v(\lambda,t)=\frac{c_0}{t}\lambda,
\qquad t\in[t_0,T],
\]
for a scalar coefficient $c_0=\frac{v(\lambda,t_0)}{\lambda}$.
\end{theorem}

\begin{remark}[Effective time and learning--rate schedules]
Throughout this work, the time variable $t$ denotes the \emph{effective time}
associated with the continuous--time gradient flow
\[
\dot\theta(t) = - \nabla_\theta \mathcal L(\theta(t)),
\]
which corresponds to a constant learning rate and no explicit scheduler.
All covariance and scaling properties used in our analysis are derived directly
from this ODE.

When a learning--rate schedule $\eta(t)$ is present, the parameter dynamics take
the form $\dot\theta = -\eta(t)\nabla_\theta \mathcal L$ and can be mapped to the
above gradient--flow form by introducing a reparameterized time \cite{lifunctional}
\[
\tau(t) = \int_0^t \eta(s)\,ds .
\]
All results in this paper then apply with $t$ replaced by the effective time
$\tau$.
\end{remark}

\subsection{Proof overview}
\label{sec:proof-overview}

We briefly outline the structure of the proof; technical details are deferred to the appendix.
First, Conditions~\ref{cond:banded-jacobian}--\ref{cond:controlled-path} imply that locality and summable incoherence of Jacobian blocks propagate along the training trajectory.
At the level of the error dynamics $\dot e = -M e$ with $M=JJ^*$, a Taylor expansion around a fixed log--shell basis shows that direct off-diagonal error exchange between shells separated by more than one shell index arises only at second order in variations of $M$ \cite{tian2025provablescalinglawsfeature}.
Consequently, the leading--order shell dynamics is governed solely by nearest--neighbor boundary fluxes, while nonlocal contributions enter only as subleading renormalizations (see Appendix~\ref{app:proof_power_law}).

Second, Condition~\ref{cond:log-shift-invariance} implies homogeneity of shell--averaged energy and flux densities under spectral rescaling.
Finally, homogeneity of the ratio $v(\lambda,t)=J(\lambda,t)/\varepsilon(\lambda,t)$ enforces a power--law functional form.
Detailed proofs are provided in Appendix~\ref{app:proof_power_law}.

\section{Structural sufficient condition for Condition~\ref{cond:log-shift-invariance}}
\label{sec:residual-condition}

Condition~\ref{cond:log-shift-invariance} plays a fundamentally different role from
Conditions~\ref{cond:banded-jacobian}--\ref{cond:controlled-path}.
While the latter encode generic locality, stability, and regularity properties that
are naturally satisfied by a wide class of modern block--stacked architectures,
Condition~\ref{cond:log-shift-invariance} imposes a genuinely nontrivial structural
requirement: the absence of any intrinsic absolute scale in the renormalized spectral
couplings beyond relative separations in logarithmic scale.

In particular, log--shift invariance does not follow from graph locality, controlled
optimization paths, or statistical self--averaging alone.
Establishing this condition therefore requires additional architectural structure.
In this section, we show that \emph{residual learning}, when combined with sufficient
depth, provides a concrete mechanism by which
Condition~\ref{cond:log-shift-invariance} can be rigorously realized.

\subsection{Parameter Jacobian structure in residual learning}
\label{subsec:residual-definition}

We begin by formalizing the Jacobian structure induced by residual learning
configurations at the level of parameters.

Let $\theta=(\theta_1,\dots,\theta_L)$ denote the decomposition of model parameters
into depth--indexed blocks, and let $f(\theta)$ denote the model output.
The \emph{full parameter Jacobian} is defined as
\[
J := \frac{\partial f}{\partial \theta},
\]
which we view as a linear operator from parameter space to function space.
Throughout this section, we adopt the block decomposition
\[
J = \bigl(J^{(1)}, J^{(2)}, \dots, J^{(L)}\bigr),
\qquad
J^{(\ell)} := \frac{\partial f}{\partial \theta_\ell},
\]
corresponding to concatenation over parameter blocks.

\paragraph{Layerwise parameter gradients and Jacobian factors.}
In residual architectures, the contribution $J^{(\ell)}$ of the $\ell$--th block
admits an explicit factorization reflecting the forward-- and backward--propagation
structure.
Let $\mathcal L$ be the training loss and denote the final hidden representation by $h_L$.
For each residual block, write the state-to-state Jacobian
\[
A_k := \frac{\partial h_{k+1}}{\partial h_k}.
\]
Then the layerwise parameter gradient admits the factorization
\[
J^{(\ell)}
:= \frac{\partial \mathcal L}{\partial \theta_\ell}
=
\Bigl(\frac{\partial \mathcal L}{\partial h_L}\Bigr)
\Bigl(\prod_{k=\ell+1}^{L-1} A_k\Bigr)
\frac{\partial F_\ell(h_\ell;\theta_\ell)}{\partial \theta_\ell},
\]
where $F_\ell$ denotes the residual branch of the $\ell$-th block.
Thus, while the \emph{full} Jacobian $J$ is obtained by concatenation,
each \emph{layerwise} Jacobian $J^{(\ell)}$ involves the same multiplicative
Jacobian structure analyzed in earlier residual learning arguments, with
left and right factors corresponding to downstream and upstream propagation,
respectively.

\paragraph{Residual form.}
In a residual learning configuration, the state Jacobian takes the near--identity form
\[
A_k = I + \varepsilon G_k,\qquad k=1,\dots,L-1,
\]
where $0<\varepsilon\le \varepsilon_0$ and $\|G_k\|_{\op}$ is uniformly controlled along the
optimization path.

\paragraph{Additive GRSD operator.}
Collect the layerwise contributions by concatenation
\[
J := \bigl(J^{(1)},\dots,J^{(L)}\bigr),
\]
and define the associated self--adjoint operator
\[
M^{(\ell)} := J^{(\ell)}J^{(\ell)*}.
\]
By concatenation, $M$ decomposes additively as
\[
M = \sum_{\ell=1}^{L} M^{(\ell)} = \sum_{\ell=1}^{L} J^{(\ell)} J^{(\ell)*}.
\]

i.e., $M$ is the sum of the self--adjoint contributions induced by individual
layerwise parameter Jacobians.

\subsection{Residual learning implies log--shift invariance}
\label{subsec:residual-theorem}

We now establish the main structural result of this section.
It shows that residual learning configurations induce log--shift invariance
through a two--stage mechanism:
first, beyond a finite depth threshold, individual layerwise Jacobian
contributions become depth--stationary on logarithmic spectral scales;
second, the additive structure of the GRSD operator causes finite--depth
inhomogeneities to be diluted once the total depth is sufficiently large.

We now state the main structural result of this section. It shows that residual learning configurations, once sufficiently deep, satisfy Condition~\ref{cond:log-shift-invariance}.

\begin{theorem}[Residual learning induces log--shift invariance beyond a depth threshold] \label{thm:bulk-stationary}
Consider a residual learning configuration in the sense of Section~\ref{subsec:residual-definition}. Fix an error tolerance $\eta \in (0,1)$. For any layer, if the layer depth $L$ satisfies 
\[ 
\ell \;\ge\; \frac{C_1}{\varepsilon^2 \eta^2} \ \ \vee\ \ \frac{C_2}{\varepsilon^2}\log\frac{1}{\eta}, 
\] 
where the constants $C_1, C_2$ depend only on the residual block law and not on $\ell$, then Condition~\ref{cond:log-shift-invariance} holds for $M^{(\ell)}$ on the intermediate spectral window. In particular, the renormalized log--bin shell coupling statistics depend, up to $O(\eta)$ error, only on relative log--scale separations. 

\end{theorem}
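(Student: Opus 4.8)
The plan is to reduce Condition~\ref{cond:log-shift-invariance} for $M^{(\ell)}$ to a statement about the log--spectral profile of $M^{(\ell)}$ on the intermediate window, and then to produce that profile from the near--identity product structure of residual propagation. For the reduction, I would use the GRSD structural hypotheses already invoked in Theorem~\ref{thm:power-law-renormalizability} (antisymmetry of intra--shell couplings and renormalizable closure) to write the renormalized shell--coupling matrix $\widehat{\mathsf K}^{(\ell)}_{ij}$, up to a controlled remainder, in terms of boundary fluxes between neighboring log--bins, which are themselves determined by overlaps of the spectral projectors of $M^{(\ell)}$ with the bins and by the cumulative mode--count $N_\ell(s)$, $s=\log\lambda$. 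On the intermediate window, the identity $\widehat{\mathsf K}^{(\ell)}_{ij}=K_h((j-i)h)+\mathrm{err}$ is then equivalent to $N_\ell$ having asymptotically stationary increments in $s$ --- i.e.\ the log--spectral density of $M^{(\ell)}$ being constant across the window up to $O(\eta)$. This is the target.

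Using the factorization of Section~\ref{subsec:residual-definition}, write $J^{(\ell)}=L_\ell R_\ell$ with downstream factor $L_\ell=(\partial\mathcal L/\partial h_L)\prod_{k=\ell+1}^{L-1}A_k$ and upstream factor $R_\ell=\partial F_\ell(h_\ell;\theta_\ell)/\partial\theta_\ell$, so $M^{(\ell)}=L_\ell R_\ell R_\ell^{\ast}L_\ell^{\ast}$. By Condition~\ref{cond:controlled-path}, on an intermediate window away from the spectral edges $L_\ell$ enters only as a bounded, Lipschitz--in--$s$, window--confined modulation, so the \emph{shape} of the log--spectrum is governed by the upstream data $R_\ell R_\ell^{\ast}$ as seen through the forward propagator $\prod_{k<\ell}A_k$ that produced $h_\ell$. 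Writing $A_k=I+\varepsilon G_k$ and passing to singular values, the log--singular--values evolve under an approximately additive recursion with per--block increments of size $O(\varepsilon)$ and per--block fluctuation variance $O(\varepsilon^2)$; since $h_\ell$ has been propagated through $\ell$ such blocks, the relevant log--singular--value law is that of a sum of $\sim\ell$ bounded, weakly dependent, variance--$O(\varepsilon^2)$ contributions, so its mass is spread over a log--range of width $\sim\varepsilon\sqrt{\ell}$. This is the ``depth--stationary'' regime referred to in the section preamble, and the limiting profile can alternatively be read off from the free--probability description of products of free near--identity operators.

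To quantify this I would invoke Condition~\ref{cond:init-incoherence} to control the dependence among the $G_k$ (summable incoherence permits a Lindeberg/martingale swap) and Condition~\ref{cond:controlled-path} to supply uniform third--moment bounds and to forbid spectral reorganization (projectors remain Lipschitz in $s$). Two separate estimates then give the two terms in the depth bound. First, the near--identity forward map relaxes toward its depth--stationary statistics with an effective rate $\Theta(\varepsilon^2)$ per block --- the first--order increments are mean--cancelling, leaving $\varepsilon^2$ as the relaxation scale --- so a Dobrushin/contraction argument shows the local statistics of $R_\ell R_\ell^{\ast}$ are within $O(\eta)$ of stationarity once $\varepsilon^2\ell\gtrsim\log(1/\eta)$, which is the $C_2\varepsilon^{-2}\log(1/\eta)$ term. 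Second, in this regime the bulk log--density behaves like a smooth, Gaussian--type profile of width $\sim\varepsilon\sqrt{\ell}$, so its relative variation across a fixed log--window of width $h$ is $O\!\bigl(h/(\varepsilon\sqrt{\ell})\bigr)$; requiring this to be $\le\eta$ gives the $C_1/(\varepsilon^2\eta^2)$ term. Combining, $N_\ell$ has stationary increments up to $O(\eta)$ on the intermediate window, and substituting back into the reduction of the first step yields $\widehat{\mathsf K}^{(\ell)}_{ij}=K_h((j-i)h)+O(\eta)$, i.e.\ Condition~\ref{cond:log-shift-invariance} for $M^{(\ell)}$; the second--stage dilution remark then follows, since the at most $O(\varepsilon^{-2}\eta^{-2})$ sub--threshold layers contribute only a fraction $O\!\bigl(1/(L\varepsilon^2\eta^2)\bigr)$ of $M=\sum_\ell M^{(\ell)}$.

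I expect the main obstacle to lie in the contraction/CLT step: showing that forward propagation through $\ell$ near--identity residual blocks genuinely drives the \emph{local} log--singular--value law into its depth--stationary, window--flat regime without invoking an independence or mixing hypothesis that would be circular. The successive $G_k$ are correlated through shared parameters and through the evolving state $h_k$, and this dependence must be tamed using only Conditions~\ref{cond:init-incoherence}--\ref{cond:controlled-path}; extracting a relaxation rate that is genuinely $\Theta(\varepsilon^2)$ rather than merely $o(1)$, and a spreading rate that is genuinely $\Theta(\varepsilon\sqrt{\ell})$, is what pins down the stated scaling of the threshold rather than a weaker one. A secondary difficulty is ensuring that the downstream factor $L_\ell$, which carries the complementary depth $L-\ell$, does not reintroduce an absolute spectral scale; this is exactly where the restriction to the \emph{intermediate} window is used essentially, so that $L_\ell$ acts as a slowly varying modulation rather than a sharp cutoff.
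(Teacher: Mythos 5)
Your overall mechanism does echo the paper's two--term threshold (CLT--type spreading of $O(\varepsilon)$ log--increments with variance $O(\varepsilon^2)$ producing the $\varepsilon^{-2}\eta^{-2}$ term, and an $\varepsilon^{2}$--rate relaxation producing the $\varepsilon^{-2}\log(1/\eta)$ term), but there is a genuine gap at your very first step: you reduce Condition~\ref{cond:log-shift-invariance} for $M^{(\ell)}$ to flatness of the cumulative mode count $N_\ell(s)$ on the window, i.e.\ to a statement purely about the eigenvalue distribution. The condition, as the paper uses it, concerns the renormalized shell \emph{coupling} statistics, which are quadratic forms in the eigenvectors, $X_{uv}=\phi_u^{\ast}A\phi_v$, together with gap--weighted combinations such as $X_{uv}/(\lambda_v-\lambda_u)$. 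A log--spectral density that is constant across the window does not by itself force these couplings to depend only on $s_v-s_u$: absolute--scale information can survive in how eigenvectors at different spectral positions align with the closure operator $A$. The paper's proof therefore has a second, independent pillar that your proposal omits: quantitative mixing of the direction process $u_\ell=(I+\varepsilon G_\ell)u_{\ell-1}/\|(I+\varepsilon G_\ell)u_{\ell-1}\|$ on the sphere (Furstenberg/Markov--chain mixing with $\tau_{\mathrm{mix}}(\eta)\le C_2\varepsilon^{-2}\log(1/\eta)$), which isotropizes eigenvector statistics; only the combination of this with shift--invariance of the accumulated log--increments gives $a(s_u,s_v)=b(s_v-s_u)+O(\eta)$, and the gap--weighted couplings are then handled by restricting to $|s_v-s_u|\ge\delta$ and Cauchy--Schwarz. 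Your Dobrushin--type contraction acts on the spectral statistics of $R_\ell R_\ell^{\ast}$, not on eigenvector directions, so it cannot substitute for this step; the asserted equivalence ``log--shift invariance $\Leftrightarrow$ stationary increments of $N_\ell$'' is exactly where the argument breaks.

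Two secondary discrepancies. First, the paper formalizes the eigenvalue part differently: it treats the pair $(u_\ell,\delta_\ell)$ with $\delta_\ell=\log\|(I+\varepsilon G_\ell)u_{\ell-1}\|$ as a Markov--additive process and applies a Berry--Esseen bound (error $C_{\mathrm{BE}}/\sqrt{L}$ with $\sigma^2=\Theta(\varepsilon^2)$) to obtain the $C_1\varepsilon^{-2}\eta^{-2}$ threshold, rather than your heuristic that a Gaussian--type bulk of width $\varepsilon\sqrt{\ell}$ has relative variation $O\bigl(h/(\varepsilon\sqrt{\ell})\bigr)$ over a bin; note your version makes $C_1$ depend on the bin width $h$, whereas the theorem requires constants depending only on the residual block law. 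Second, the dependence among the $G_k$ is handled in the paper by the Markov--additive/ergodicity framework under Condition~\ref{cond:controlled-path}, not by a Lindeberg swap from Condition~\ref{cond:init-incoherence}. These points are repairable, but the missing eigenvector--isotropization step is a substantive omission, not a technicality.
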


\begin{proof}[Proof sketch] The residual form $A_k = I + \varepsilon G_\ell$ induces additive log--spectral increments across depth. Under the absence of depth--wise parameter sharing and the controlled path regularity of Condition~\ref{cond:controlled-path}, these increments define a stationary Markov--additive process on spectral directions. Two mechanisms govern the depth threshold. First, Berry--Esseen type bounds \cite{bolthausen1982berryesseen} yield self--averaging of accumulated log--spectral increments once $L \gtrsim \varepsilon^{-2}\eta^{-2}$. Second, the near--identity residual perturbations induce quantitative mixing on the projective space, leading to isotropization of spectral directions on a scale $L \gtrsim \varepsilon^{-2}\log(1/\eta)$. Beyond the larger of these two thresholds, bin--averaged coupling statistics become translation--invariant in logarithmic scale. A complete proof is provided in Appendix~\ref{sec:appendix-residual-proof}. \end{proof} 

Theorem~\ref{thm:bulk-stationary} identifies residual learning as a mechanism
by which individual layerwise Jacobian contributions become asymptotically
free of absolute scale information once depth exceeds a finite threshold. Its proof relies in an essential way on the near--identity residual parameterization and depth--wise statistical stationarity. The emergence of directionally homogeneous behavior under repeated small random perturbations is consistent with classical results on random matrix products and Markovian mixing on projective spaces \citep{furstenberg1960products,levin2009markov}. As a result, the argument does not directly extend to architectures with explicit gating or depth--dependent weighting, nor to general non--skip architectures. This limitation should not be interpreted as ruling out scaling behavior in such models; rather, it indicates that different analytical tools would be required to verify Condition~\ref{cond:log-shift-invariance} outside the residual setting.

We now show that the additive structure of the GRSD operator converts this bulk stationarity into global log--shift invariance.

\begin{proposition}[Depth averaging yields Condition~\ref{cond:log-shift-invariance}]
\label{prop:depth-avg-logshift}
Assume the hypotheses of Theorem~\ref{thm:bulk-stationary}.  Fix a tolerance
$\eta\in(0,1)$ and an intermediate spectral window as in
Condition~\ref{cond:log-shift-invariance}.
Let $\ell_*$ be any depth index such that for every $\ell>\ell_*$,
the layerwise operator $M^{(\ell)}:=J^{(\ell)}J^{(\ell)*}$ satisfies
Condition~\ref{cond:log-shift-invariance} on the intermediate window
with error at most $O(\eta)$ (as guaranteed by Theorem~\ref{thm:bulk-stationary}).

Decompose
\[
M=\sum_{\ell=1}^{L}M^{(\ell)}
=\sum_{\ell=1}^{\ell_*}M^{(\ell)}+\sum_{\ell=\ell_*+1}^{L}M^{(\ell)}
=:M_{\mathrm{bd}}+M_{\mathrm{bulk}}.
\]
Then there exists a constant $C>0$, depending only on the uniform bounds in
Condition~\ref{cond:controlled-path}, such that if
\[
L \;\ge\; C\,\ell_*\,\eta^{-1},
\]
the renormalized log--bin shell coupling statistics induced by $M$
satisfy Condition~\ref{cond:log-shift-invariance} on the intermediate window
(with $O(\eta)$ error).
\end{proposition}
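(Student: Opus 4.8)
The plan is to turn the additive decomposition $M=M_{\mathrm{bd}}+M_{\mathrm{bulk}}$ into a corresponding decomposition of the renormalized shell--coupling statistics and then to argue that the boundary block carries only a vanishing \emph{relative} weight once $L\gg\ell_*$. The key structural fact is that the extensive GRSD quantities are additive over parameter blocks: if $P_i$ denotes the spectral projector of the \emph{total} operator $M$ onto the logarithmic bin $\mathcal B_i$ on the intermediate window, then $\mathrm{tr}(MP_i)=\sum_\ell \mathrm{tr}(M^{(\ell)}P_i)$ by linearity of the trace, and more generally the bilinear boundary--flux forms entering the renormalizable GRSD closure are linear in $M$ once the (slowly varying) bin profiles are held fixed. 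Condition~\ref{cond:controlled-path} licenses treating those profiles as fixed --- on the intermediate window the spectral projectors evolve without abrupt reorganization and depend Lipschitz--continuously on the logarithmic spectral coordinate --- so on that window one may write
\[
\widehat{\mathsf K}_{ij}[M]
\;=\;
\frac{1}{Z}\sum_{\ell=1}^{L} w_\ell\,\widehat{\mathsf K}^{(\ell)}_{ij}
\;+\;
\mathrm{err}_{\mathrm{coh}},
\qquad
w_\ell \asymp \mathrm{tr}\!\bigl(M^{(\ell)}\mathbf{1}_{\mathrm{win}}\bigr),
\quad
Z=\sum_\ell w_\ell ,
\]
where $\widehat{\mathsf K}^{(\ell)}_{ij}$ is the block--$\ell$ coupling statistic and the cross--block remainder $\mathrm{err}_{\mathrm{coh}}$ is controlled by the summable incoherence of Condition~\ref{cond:init-incoherence}, propagated along the training path via Conditions~\ref{cond:banded-jacobian}--\ref{cond:controlled-path}, hence $\mathrm{err}_{\mathrm{coh}}\to0$ in the joint width/depth/bin--size limit.

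Next I would split the block sum at $\ell_*$. For the bulk terms $\ell>\ell_*$, the hypothesis (via Theorem~\ref{thm:bulk-stationary}) gives $\widehat{\mathsf K}^{(\ell)}_{ij}=K_h\!\bigl((j-i)h\bigr)+O(\eta)$ with a common translation--invariant kernel $K_h$, and since a weighted average of such terms is again $K_h\!\bigl((j-i)h\bigr)+O(\eta)$, we obtain $Z_{\mathrm{bulk}}^{-1}\sum_{\ell>\ell_*}w_\ell\,\widehat{\mathsf K}^{(\ell)}_{ij}=K_h\!\bigl((j-i)h\bigr)+O(\eta)$ with $Z_{\mathrm{bulk}}:=\sum_{\ell>\ell_*}w_\ell$. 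For the boundary terms $\ell\le\ell_*$, Condition~\ref{cond:controlled-path} supplies only uniform bounds $\|M^{(\ell)}\|_{\mathrm{op}}\le C_J^2$, so $w_\ell=O(1)$ and $\bigl|\widehat{\mathsf K}^{(\ell)}_{ij}\bigr|\le B$ for a constant $B$, giving $Z_{\mathrm{bd}}:=\sum_{\ell\le\ell_*}w_\ell=O(\ell_*)$ and a boundary average bounded by $B$. The self--averaging clause of Condition~\ref{cond:controlled-path} also yields a matching lower bound $w_\ell\gtrsim1$ for bulk layers on the intermediate window, so $Z_{\mathrm{bulk}}\gtrsim L-\ell_*\gtrsim L$ and $Z_{\mathrm{bd}}/Z\lesssim \ell_*/L$. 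Combining,
\[
\widehat{\mathsf K}_{ij}[M]
=\frac{Z_{\mathrm{bulk}}}{Z}\bigl(K_h\!\bigl((j-i)h\bigr)+O(\eta)\bigr)
+O\!\Bigl(\tfrac{Z_{\mathrm{bd}}}{Z}\Bigr)
+\mathrm{err}_{\mathrm{coh}}
=K_h\!\bigl((j-i)h\bigr)+O(\eta)+O\!\bigl(\ell_*/L\bigr)+o(1).
\]
Choosing $C$ large enough to absorb the implied constant in $O(\ell_*/L)$ --- a constant determined solely by $C_J^2$, the coupling bound $B$, and the self--averaging lower bound, all of which come from Condition~\ref{cond:controlled-path} --- the hypothesis $L\ge C\,\ell_*\,\eta^{-1}$ forces $O(\ell_*/L)\le O(\eta)$, and Condition~\ref{cond:log-shift-invariance} for $M$ on the intermediate window follows with $O(\eta)$ error.

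The step I expect to be the main obstacle is the passage from the \emph{intrinsic} per--block shell decomposition underlying Theorem~\ref{thm:bulk-stationary} to the \emph{global} logarithmic bins of $M$: because eigenvalues of a sum of positive operators are not sums of the summands' eigenvalues, the quantity $\mathrm{tr}(M^{(\ell)}P_i)$ computed with the total--$M$ projector $P_i$ is not literally the block--$\ell$ statistic computed with that block's own projector $P^{(\ell)}_i$, and making the replacement legitimate requires showing that, after coarse--graining at bin width $h$ on the intermediate window, the log--spectral measure of $M$ is close to the appropriate superposition of the per--block measures and that the two families of projectors nearly agree when tested against $M^{(\ell)}$. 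This is exactly what summable incoherence (Condition~\ref{cond:init-incoherence}, propagated along the path) and the Lipschitz spectral regularity of Condition~\ref{cond:controlled-path} are meant to deliver, and carrying the estimate out quantitatively with an error that is $o(1)$ uniformly in the bin indices is the technical heart of the argument. A secondary point requiring care is the non--degeneracy lower bound $Z_{\mathrm{bulk}}\gtrsim L$: one must rule out the scenario in which a positive fraction of bulk layers deposit essentially all of their energy outside the intermediate window, for which the concentration--at--the--right--rate clause of Condition~\ref{cond:controlled-path} is again essential --- without it the simple counting $Z_{\mathrm{bd}}/Z\lesssim\ell_*/L$ could fail.
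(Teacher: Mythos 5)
Your proposal is correct and follows essentially the same route as the paper's proof: decompose the bin--level coupling statistics additively over the layerwise operators $M^{(\ell)}$, bound the boundary block's contribution by $O(\ell_*/L)$ using the uniform operator bounds of Condition~\ref{cond:controlled-path}, let the bulk inherit $K_h\bigl((j-i)h\bigr)+O(\eta)$ from Theorem~\ref{thm:bulk-stationary}, and choose $L\ge C\,\ell_*\eta^{-1}$ to absorb the dilution error. The only substantive differences are that the paper works with the unnormalized additive decomposition $\widehat{\mathsf K}_{ij}(M)=\sum_\ell\widehat{\mathsf K}_{ij}(M^{(\ell)})$ averaged by $1/L$ (so it never needs your lower bound $Z_{\mathrm{bulk}}\gtrsim L$ on bulk weights), and the bin/projector mismatch you rightly flag as the technical heart is dispatched in the paper simply by asserting that the closure statistics are quadratic in $J$, hence linear in $M$ once the binning is fixed --- so your treatment is, if anything, more explicit about that step than the paper's.
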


\begin{proof}[Proof sketch]
The key input is the additive structure of the GRSD operator across depth.
Since $M$ decomposes as a sum of layerwise contributions, the renormalized
log--bin shell coupling statistics considered in
Condition~\ref{cond:log-shift-invariance} decompose additively as well.
We therefore analyze the depth--averaged couplings by splitting $M$ into a
finite boundary part and a bulk part.

The boundary contribution involves only the first $\ell_*$ layers.
By the uniform operator bounds in
Condition~\ref{cond:controlled-path}, each layer contributes $O(1)$ to the
bin--level statistics on the intermediate spectral window.
After depth averaging, the total boundary contribution is therefore suppressed
by a factor $\ell_*/L$.

For the bulk layers, Theorem~\ref{thm:bulk-stationary} shows that each individual
layer already satisfies log--shift invariance on the intermediate window,
up to an $O(\eta)$ error.
Averaging these bulk contributions over depth preserves the same
log--shift invariant form, with the same $O(\eta)$ accuracy.

Combining the two parts, the depth--averaged shell coupling statistics induced by
$M$ consist of a log--shift invariant bulk term plus errors of order
$O(\eta)+O(\ell_*/L)$.
Choosing $L$ sufficiently large compared to $\ell_*$, specifically
$L\gtrsim \ell_*\eta^{-1}$, makes the boundary dilution error comparable to the
bulk error.
This yields Condition~\ref{cond:log-shift-invariance} for the full operator $M$
on the intermediate spectral window.
\end{proof}

Proposition~\ref{prop:depth-avg-logshift} completes the structural argument.
Log--shift invariance emerges not from exact scale symmetry at finite depth,
but from depth averaging over a stationary bulk of residual blocks, with
finite--depth inhomogeneities rendered negligible once the network exceeds
a controlled depth threshold.

\section{Interpretation of the Sufficient Conditions}
\label{sec:interpretation}

In this section we interpret the sufficient conditions introduced in Section~\ref{sec:sufficient-conditions} and clarify their conceptual meaning.
Our goal is not to justify these conditions empirically or to argue that they hold universally in deep learning, but rather to explain what structural and dynamical properties they encode.
Throughout, we emphasize that these conditions are sufficient but not necessary for power--law renormalizability of the GRSD shell dynamics.

\subsection{Boundedness of the gradient computation graph}
\label{sec:interp-locality}

Condition~\ref{cond:banded-jacobian} should be interpreted as a boundedness condition on the evolution of the gradient computation graph, rather than as a strict locality or acyclicity assumption.
It requires that the instantaneous evolution of each Jacobian block can be expressed using only a uniformly bounded neighborhood of other blocks, with coefficients that remain controlled along the training trajectory.

A canonical example of a bounded computation graph is a directed acyclic graph (DAG) \cite{abadi2016tensorflow,paszke2019pytorch}, such as those induced by standard feedforward networks, residual architectures, or transformers.
In these settings, gradient propagation is naturally confined by the depth of the graph, and the influence of any parameter update cannot spread arbitrarily far in an infinitesimal time step.
As a result, the induced Jacobian evolution is uniformly bounded in the sense required by Condition~\ref{cond:banded-jacobian}.

By contrast, recurrent architectures with unconstrained feedback loops may violate this boundedness condition.
In particular, when cyclic structures contribute divergent amplification—such as repeated multiplication by unstable recurrent weights—the Jacobian evolution may accumulate unbounded long-range couplings.
Classical recurrent neural networks (RNN) \cite{hopfield1982neural} and certain long short-term memory (LSTM) \cite{hochreiter1997long} configurations fall into this category when their recurrent dynamics is not properly controlled.
In such cases, the GRSD coarse-graining procedure may fail, as infinitesimal updates can induce global spectral reorganization.

Importantly, the presence of cycles alone does not preclude boundedness.
Recurrent or iterative architectures may still satisfy Condition~\ref{cond:banded-jacobian} provided that the contribution of cycles is contractive or effectively damped.
For example, when recurrent interactions decay exponentially
\cite{gu2021efficiently,smith2022simplified,gu2024mamba,peng2023rwkv,xu2025bringing},
the cumulative influence of long feedback paths remains finite, and Jacobian evolution can be bounded despite the existence of loops.
From the GRSD perspective, such architectures behave similarly to bounded-depth computation graphs at the level of infinitesimal learning dynamics.
We provide a precise proposition and a proof sketch for this case in Appendix \ref{prop:rwkv-banded}

Viewed in this light, Condition~\ref{cond:banded-jacobian} is best understood as excluding learning configurations in which gradient propagation is unbounded or unstable, rather than as forbidding recurrence or complex connectivity per se.
It delineates the class of computation graphs for which shell-wise coarse-graining remains well-defined over the training horizon.

\subsection{Initial functional incoherence}
\label{sec:interp-incoherence}

Condition~\ref{cond:init-incoherence} constrains the geometry of the Jacobian at initialization.
It requires that correlations between distant Jacobian blocks, measured through operator inner products, decay sufficiently fast so that their tail is summable.
Modern zero-mean random i.i.d. initializations
\cite{glorot2010understanding,he2015delving,klambauer2017self}
commonly used in deep learning naturally satisfy the condition of initial gradient incoherence, in the sense that gradient components associated with distinct functional modes are uncorrelated at initialization.

This assumption does not require strict independence or orthogonality between blocks.
Local correlations are permitted, and even expected, as a consequence of shared inputs, architectural coupling, or structured parameterization.
The essential requirement is that long-range correlations are weak enough to prevent the formation of coherent global modes at initialization.

Within the GRSD framework, initial functional incoherence plays a role analogous to short-range correlations in statistical physics.
It ensures that coarse-grained shell dynamics is not dominated by fine-tuned cancellations or global alignments present at initialization.
As shown in the proof of Theorem~\ref{thm:power-law-renormalizability}, this condition propagates along the training trajectory under the locality and stability assumptions.

\subsection{Controlled Jacobian evolution}
\label{sec:interp-stability}

Condition~\ref{cond:controlled-path} imposes a stability constraint on the learning trajectory.
It requires that both the Jacobian and its instantaneous time derivative remain uniformly bounded over the training horizon.

Conceptually, this condition ensures that learning proceeds through a sequence of controlled, incremental updates rather than through abrupt transitions.
From the perspective of GRSD, it guarantees that temporal evolution and spectral coarse-graining are compatible: the shell dynamics remains well-defined and does not develop singular behavior over finite time.

This assumption should be viewed as an abstraction of common stability-inducing practices in modern deep learning, such as normalization, residual connections, and conservative optimization schedules
\citep{terjek2022framework,pennington2017resurrecting,tarnowski2019dynamical}. 
However, the condition itself is agnostic to the specific optimization algorithm and does not rely on stochasticity or noise-induced regularization.

\subsection{Log--shift invariance and rigidity of renormalized dynamics}
\label{sec:interp-scale-cov}

Condition~\ref{cond:log-shift-invariance} encodes the absence of any
intrinsic absolute scale in the renormalized shell couplings.
Unlike pointwise equivariance or parameter--level symmetries, this
condition is formulated at the level of coarse--grained spectral
statistics: it requires that, on the intermediate spectral window,
renormalized shell interactions depend only on relative separations in
logarithmic scale and not on absolute spectral position.

Crucially, log--shift invariance should be understood as a structural
constraint rather than a generic consequence of stability or locality.
It rules out not only explicit scale parameters, but also implicit
time--dependent reference scales that could emerge during training and
induce drift of the effective shell dynamics.
In this sense, Condition~\ref{cond:log-shift-invariance} is a genuinely
nontrivial requirement that restricts the admissible large--scale
behavior of the renormalized dynamics.

Within the GRSD framework, log--shift invariance alone enforces
homogeneity of the renormalized shell equations, but does not by itself
fix the functional form of the shell velocity.
The emergence of a power--law velocity is instead a rigidity
phenomenon: when log--shift invariance is combined with the intrinsic
time--rescaling covariance of gradient flow, all admissible functional
forms except pure power laws are excluded.
Such rigidity phenomena, in which scale invariance combined with
consistency constraints restrict admissible large-scale behavior to
power laws, are well known in renormalization-group analyses of critical
phenomena and turbulent transport
\citep{kadanoff1966scaling,wilson1983renormalization,forster1977large}.
If Condition~\ref{cond:log-shift-invariance} fails, the shell dynamics
may exhibit scale--dependent drift, moving spectral reference points, or
broken scaling behavior, even if the remaining conditions are satisfied.

\subsection{Summary: renormalizable shell dynamics}
\label{sec:interp-summary}

Taken together, the four conditions introduced in
Section~\ref{sec:sufficient-conditions} enforce a learning dynamics that
is local in the computation graph, weakly correlated at initialization,
stable along the optimization path, and free of intrinsic absolute scales
in the renormalized spectral description.
These properties are precisely those required for the GRSD shell
dynamics to admit a well--defined and renormalizable closure under
logarithmic spectral coarse--graining.

We emphasize that none of the conditions alone is sufficient to
guarantee power--law behavior.
Conditions~\ref{cond:banded-jacobian}--\ref{cond:controlled-path} ensure the existence and stability of the renormalized
shell dynamics, while Condition~\ref{cond:log-shift-invariance} restricts its large--scale structure
by enforcing log--shift invariance.
Power--law scaling then emerges as a rigidity consequence when these
structural properties are combined with the intrinsic time--rescaling
covariance of gradient flow.
Under this combination, the GRSD framework predicts that the only
admissible large--scale form of the shell velocity is a power law.

\section{Implications for Modern Architectures}
\label{sec:architectures}

The sufficient conditions introduced in
Section~\ref{sec:sufficient-conditions} are formulated at the level of
operator dynamics and gradient evolution, rather than in terms of
specific model classes.
As a result, they can be meaningfully interpreted across a broad range
of modern architectures.
At the same time, the conditions—especially
Condition~\ref{cond:log-shift-invariance}—are structural and nontrivial,
and are not generically guaranteed by architectural design alone.
In this section, we discuss how common deep learning architectures align
with the sufficient conditions, and where their limitations arise.
Throughout, we emphasize that the discussion is interpretive rather than
universal: the presence of architectural features consistent with some
conditions does not imply that power--law scaling must occur, but rather
that it is not a priori excluded within the GRSD framework.

\subsection{Multilayer perceptrons}
\label{sec:arch-mlp}

Multilayer perceptrons (MLPs) \cite{rosenblatt1957perceptron} provide the simplest setting in which the
sufficient conditions can be examined.
Their feedforward structure naturally induces locality in the gradient
computation graph, consistent with
Condition~\ref{cond:banded-jacobian}.
Moreover, standard random initialization schemes typically lead to weak
functional correlations between distant layers in sufficiently wide
networks, aligning with
Condition~\ref{cond:init-incoherence} in early training regimes.

When trained with stable optimization procedures, MLPs may also satisfy
Condition~\ref{cond:controlled-path} over substantial training horizons.
However, the absence of an explicit near--identity accumulation
mechanism means that Condition~\ref{cond:log-shift-invariance} is not
generically enforced in MLPs.
As a result, while MLPs often admit a renormalizable shell description,
power--law behavior cannot be structurally guaranteed and may depend
sensitively on training dynamics, data distribution, or implicit
regularization effects.

\subsection{Convolutional networks}
\label{sec:arch-cnn}

Convolutional networks introduce additional structure through spatial
locality and weight sharing.
These features further reinforce locality in the gradient computation
graph and typically support
Condition~\ref{cond:banded-jacobian}.
Standard initialization and normalization practices can also promote
weak long--range functional correlations at initialization and stable
Jacobian evolution during training.

From the perspective of GRSD, convolutional architectures often admit a
well--defined renormalized shell dynamics.
However, as with MLPs, log--shift invariance of renormalized shell
couplings is not generically guaranteed.
Architectural choices such as aggressive pooling, explicit bottlenecks,
or strongly scale--dependent preprocessing may introduce effective
spectral reference scales, leading to deviations from simple power--law
behavior even when the remaining conditions are approximately satisfied.

An important exception arises in convolutional architectures equipped
with residual parameterizations.
Residual convolutional networks \citep{he2016resnet} introduce an explicit near--identity
accumulation mechanism across depth, which aligns closely with the
structural setting analyzed in Section~\ref{sec:residual-condition}.
In this case, log--shift invariance of the renormalized shell couplings
can be established under additional assumptions, rather than merely
postulated.
The residual parameterizations, as introduced in ResNet architectures
\citep{he2016resnet}, have also been shown to promote stable Jacobian
spectra across depth \citep{tarnowski2019dynamical}.

\subsection{Transformer architectures}
\label{sec:arch-transformer}

Transformer architectures combine multiple computational modules within
each block and rely heavily on normalization and residual connections.
These design choices promote stable training dynamics and help maintain
bounded Jacobian norms over long training horizons, consistent with
Condition~\ref{cond:controlled-path}
\citep{vaswani2017attention,ba2016layernorm}. 

The effective gradient propagation distance per training step in
transformers remains limited, supporting the locality requirement of
Condition~\ref{cond:banded-jacobian}.
At initialization, widely used random parameterizations lead to weak
functional correlations between distant blocks in sufficiently wide
settings.
Nevertheless, the presence of residual connections alone does not
automatically imply log--shift invariance of renormalized shell
couplings.
Whether Condition~\ref{cond:log-shift-invariance} is realized depends on
additional structural and dynamical properties, such as the near--identity
nature of residual accumulation and the absence of depth--dependent
gating or weighting effects.

Accordingly, while transformers empirically exhibit robust scaling
behavior across many regimes, deviations from power--law scaling are
naturally interpreted within GRSD as violations of one or more
sufficient conditions, rather than as contradictions of the framework.

\subsection{Structured state--space models}
\label{sec:arch-ssm}

Structured state--space models and related architectures introduce \cite{gu2021efficiently,smith2022simplified,gu2024mamba}
explicit recurrence or long--range temporal structure.
From the perspective of
Condition~\ref{cond:banded-jacobian}, such models are compatible with the
sufficient conditions provided that recurrent interactions remain
effectively contractive, so that Jacobian evolution does not induce
unbounded long--range coupling.

When recurrence is stabilized through normalization, decay mechanisms,
or controlled parameterization, the resulting Jacobian dynamics may
remain localized and amenable to renormalized shell descriptions.
However, strong recurrence or unstable feedback can violate locality or
stability assumptions, leading to non--renormalizable shell dynamics and
the breakdown of simple scaling behavior.
In such cases, GRSD predicts deviations from power--law scaling as a
natural consequence of the learning configuration.

\subsection{Summary}
\label{sec:arch-summary}

Across architectures, the sufficient conditions identified in this work
do not single out a specific model class.
Rather, they characterize a regime of learning dynamics in which
locality, weak long--range correlations, stability, and the absence of
intrinsic spectral reference scales coexist.
While modern architectures frequently incorporate design choices that
support some of these properties, none guarantees them in isolation.

From the GRSD perspective, the empirical success of neural scaling laws
reflects the prevalence of learning configurations that lie within this
renormalizable regime.
Equally, observed deviations from power--law behavior signal departures
from the sufficient conditions—particularly from
Condition~\ref{cond:log-shift-invariance}—rather than failures of the
GRSD framework itself.

\section{Experimental Validation of Condition 4}
\label{sec:experiments}

The theory developed in this work relies on three intertwined empirical
assumptions about the spectral organization of learning dynamics:

\begin{enumerate}
  \item \textbf{Controlled inter-shell coupling.}
  Spectral shells interact weakly, so that cross-shell transport acts as a
  subleading and stable perturbation.

  \item \textbf{Necessity of shell coarse-graining.}
  Intra-shell mixing is strong compared to inter-shell transport, making
  coarse-graining over spectral shells essential for any effective description.

  \item \textbf{Realization of Condition~\ref{cond:log-shift-invariance} by residual learning.}
  In residual architectures, the induced spectral transport operator becomes
  approximately Toeplitz in an intermediate spectral window, reflecting
  approximate shift-invariance across shells.
\end{enumerate}

In this section, we provide minimal yet direct empirical evidence for all three
claims using controlled experiments on CIFAR-10 \cite{krizhevsky2014cifar}.
We emphasize that the goal is not to exhaustively characterize training
dynamics, but to validate the structural assumptions required for a
closed, shell-based description of learning.

\subsection{Experimental setup}
\label{sec:exp_setup}

We evaluate the spectral coupling structure induced by training dynamics on
image classification using CIFAR-10.
All experiments are conducted with ResNet-18 and a corresponding plain
(non-residual) architecture with identical depth and channel configuration.

\paragraph{Models.}
The residual model is a standard ResNet-18 \cite{he2016resnet}.
The plain counterpart (PlainNet-18) is constructed by removing all residual
skip connections while keeping convolutional blocks, batch normalization,
and parameter counts identical.
Both models share the same trained parameter checkpoints.

\paragraph{Dataset and checkpoints.}
We use the CIFAR-10 validation set with batch size $B=64$.
Unless otherwise stated, all spectral operators are evaluated at a fixed
training checkpoint using a local finite-difference approximation of the
operator derivative (see below), ensuring that both architectures are compared
at the same effective training stage.

\paragraph{Error representation.}
The error vector $e(x;\theta)$ is constructed from model logits using the
$p-\text{onehot}$ formulation,
\[
e = p_\theta(x) - \mathrm{onehot}(y),
\]
flattened over batch and class dimensions.
This choice aligns the error space with the Fisher-like geometry of
classification loss and is used consistently across all experiments.

\subsection{Spectral operator estimation}
\label{sec:exp_operator}

Our goal is to probe the spectral structure of the operator
\[
M(\theta) = \mathbb{E}_x \big[ J(x;\theta) J(x;\theta)^\top \big],
\]
where $J = \partial e / \partial \theta$ is the Jacobian of the error vector
with respect to model parameters.

\paragraph{Monte Carlo operator evaluation.}
The expectation defining $M$ is approximated using a small number ($S=4$)
of fixed mini-batches sampled from the validation set.
Matrix--vector products $Mv$ are computed without explicitly forming $M$,
using a Jacobian--vector product followed by a vector--Jacobian product
(JVP--VJP), enabling scalable operator access.

\paragraph{Block Lanczos projection.}
We apply a block Lanczos algorithm \cite{golub2010matrices,ubaru2017fast} with block size $b=8$ and $k=16$ steps,
resulting in a Krylov subspace of dimension $r = bk = 128$.
This produces a block-tridiagonal matrix $T$ whose eigenpairs
$\{(\lambda_i, u_i)\}$ approximate the spectrum of $M$ in the probed subspace.

\paragraph{Directional derivative of the operator.}
To probe spectral transport, we compute a finite-difference approximation
of the operator derivative
\[
\dot M \approx \frac{M(\theta - \varepsilon \nabla \mathcal L) - M(\theta)}{\varepsilon},
\]
where $\varepsilon$ is a small step along the gradient direction.
The projected operator $\Omega = U^\top (Q^\top \dot M Q) U$ is expressed in
the Ritz eigenbasis of $M$.

More details regarding the estimation algorithm are included in Appendix \ref{app:lanczos}.

\subsection{Shell binning and Toeplitz diagnostics}
\label{sec:exp_toeplitz}

Eigenvalues of $M$ are binned into logarithmically spaced spectral shells
based on $\log \lambda$.
Given the projected operator $\Omega$, we define a coarse-grained shell
coupling matrix $K$ by averaging squared couplings within each shell pair:
\[
K_{ij} = \mathbb{E}_{u \in \mathcal S_i, v \in \mathcal S_j}
\big[ |\Omega_{uv}|^2 \big].
\]

\paragraph{Toeplitz residual.}
To quantify the degree of shift-invariance across shells, we compare $K$
to its best Toeplitz approximation obtained by averaging along diagonals.
The normalized Frobenius residual
\[
\mathrm{Res}(K)
= \frac{\|K - \mathcal T(K)\|_F^2}{\|K\|_F^2}
\]
serves as our primary diagnostic: lower values indicate stronger Toeplitz
structure i.e. the coupling strength relies on the relative scale shift rather than absolute scales.

All scores are computed over sliding spectral windows.
We focus on windows centered in the intermediate spectral regime, where the
assumptions underlying our effective theory are expected to hold.

\subsection{Diagonal scaling of spectral couplings}
\label{sec:diag_scaling}

\begin{figure}[t]
    \centering
    \begin{subfigure}[t]{0.48\linewidth}
        \centering
        \includegraphics[width=\linewidth]{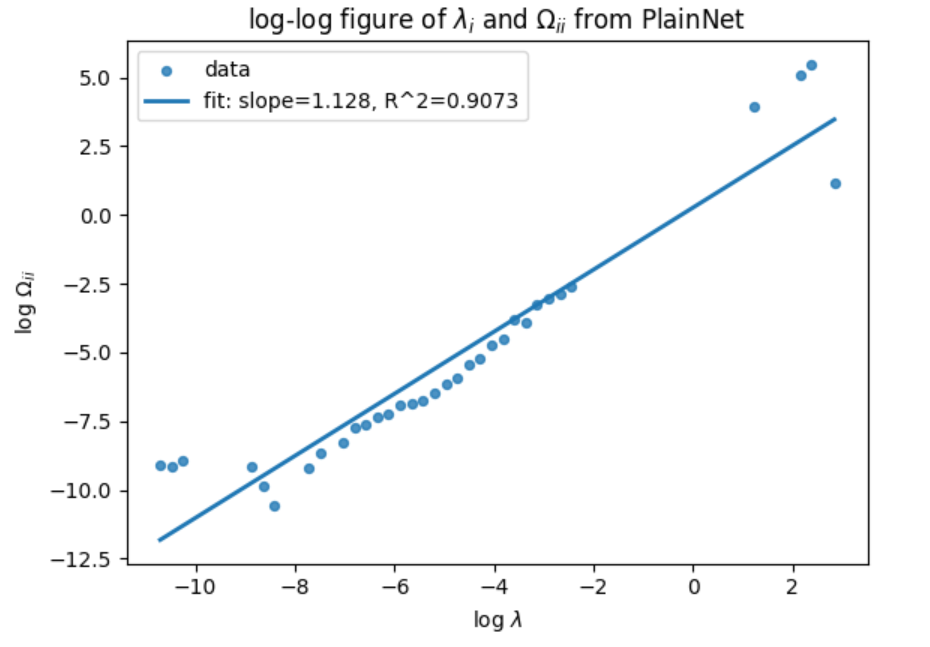}
        \caption{PlainNet}
        \label{fig:plainnet_diag}
    \end{subfigure}
    \hfill
    \begin{subfigure}[t]{0.48\linewidth}
        \centering
        \includegraphics[width=\linewidth]{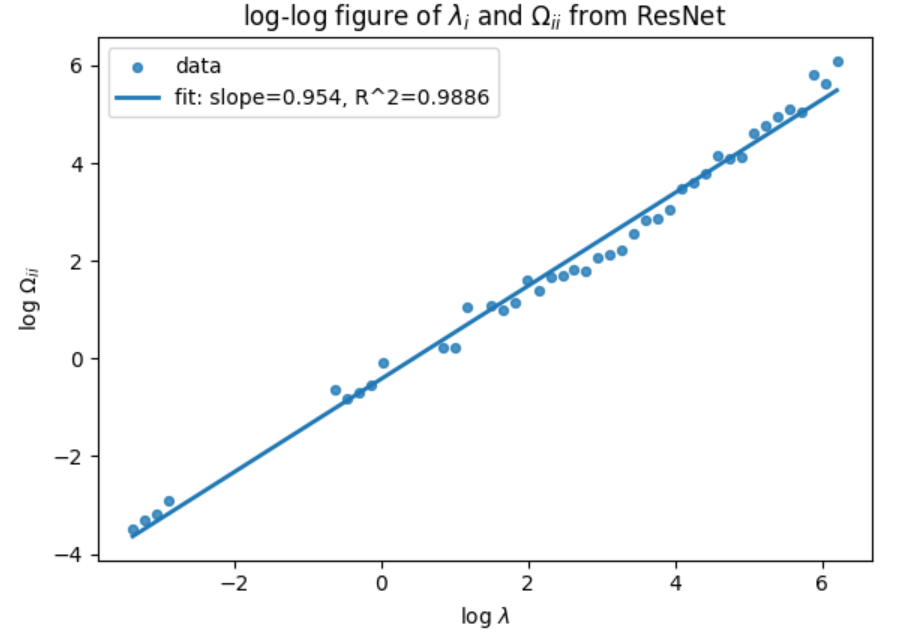}
        \caption{ResNet}
        \label{fig:resnet_diag}
    \end{subfigure}
    \caption{
    Log--log plots of diagonal spectral couplings $\Omega_{ii}$ versus eigenvalues
    $\lambda_i$ for PlainNet (left) and ResNet (right).
    Each point corresponds to a log-shell averaged diagonal entry.
    Solid lines denote linear least-squares fits in logarithmic coordinates.
    ResNet exhibits a scaling exponent closer to unity and significantly reduced
    scatter compared to PlainNet, consistent with the scale-free diagonal
    structure predicted by Condition~\ref{cond:log-shift-invariance}.
    }
    \label{fig:diag_scaling}
\end{figure}

We empirically examine the diagonal components of the renormalized spectral
coupling matrix, $\Omega_{ii}$, to test the scale-free diagonal structure
postulated in Condition~\ref{cond:log-shift-invariance}. Recall that Condition~\ref{cond:log-shift-invariance} predicts that, on an
intermediate spectral window, the diagonal terms satisfy a linear scaling
relation
\begin{equation}
\Omega_{ii} \propto \lambda_i,
\end{equation}
corresponding to scale-invariant drift in logarithmic spectral coordinates.

Figure~\ref{fig:diag_scaling} shows log--log scatter plots of $\Omega_{ii}$ versus
$\lambda_i$ for both architectures, together with linear least-squares fits in
logarithmic coordinates. In both cases, the data exhibit a clear power-law
relation over a broad intermediate spectral range, consistent with the
scale-free diagonal ansatz of Condition~\ref{cond:log-shift-invariance}.

Quantitatively, the ResNet model displays an almost perfectly linear scaling
with fitted slope close to unity and an excellent coefficient of determination
($R^2 \approx 0.99$). In contrast, while PlainNet also shows approximate
power-law behavior, its fitted slope deviates more noticeably from unity and
exhibits increased scatter, reflected in a lower $R^2$ value. This indicates
that the diagonal scaling relation is significantly more stable and coherent in
the presence of residual connections.

\paragraph{Interpretation.}
The near-linear relation $\Omega_{ii} \sim \lambda_i$ observed in ResNet
supports the assumption that diagonal spectral couplings introduce no intrinsic
absolute scale beyond that already encoded in $\lambda_i$. This behavior is a
necessary ingredient for scale-invariant log-spectral drift and underpins the
emergence of renormalizable spectral dynamics. The degradation of this relation
in PlainNet suggests that residual connections play a structural role in
stabilizing diagonal scaling, thereby promoting the conditions required for
log-shift invariance and controlled spectral transport.

Overall, these results provide direct empirical support for the diagonal
component of Condition~\ref{cond:log-shift-invariance} and highlight the architectural dependence of
scale-free spectral structure.

\begin{figure}[t]
  \centering
  \includegraphics[width=0.8\linewidth]{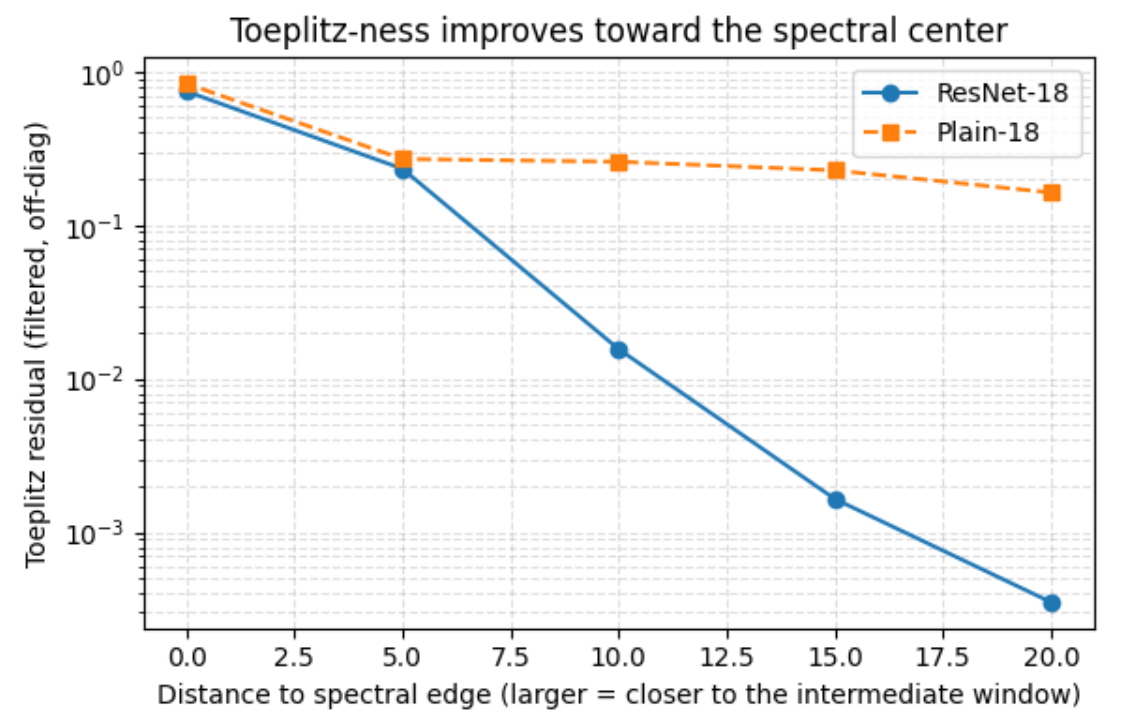}
  \caption{
  Toeplitz residual of the coarse-grained spectral coupling matrix as a function
  of distance to the spectral edge (larger values correspond to windows closer
  to the intermediate spectral regime).
  Residual networks exhibit a rapid and monotonic improvement in Toeplitz-ness
  toward the spectral center, while the corresponding plain networks show
  substantially weaker and flatter behavior.
  }
  \label{fig:toeplitz_resnet_vs_plain}
\end{figure}

\subsection{Residual learning induces Toeplitz spectral structure}
\label{sec:exp_residual}

Figure~\ref{fig:toeplitz_resnet_vs_plain} compares the Toeplitz residual of
ResNet-18 and PlainNet-18 as the analysis window moves from the spectral edge
toward the spectral center.

Residual networks exhibit a rapid and monotonic improvement in Toeplitz
structure when approaching the intermediate spectral regime.
In contrast, the plain architecture shows significantly weaker Toeplitz
behavior with little sensitivity to window position.
This provides direct empirical evidence that residual learning facilitates
the approximate shift-invariance across spectral shells required by
Condition~\ref{cond:log-shift-invariance}.

Near the spectral edges, Toeplitz structure deteriorates for both models.
This behavior is expected, as these regions lie outside the effective
spectral and time window assumed by the theory, rather than indicating a
failure of the condition itself.






\section{Discussion and Limitations}
\label{sec:discussion}

This work identifies a set of sufficient conditions under which the GRSD shell dynamics admits a power--law renormalized description.
While these conditions provide a rigorous route to power--law behavior, they are intentionally not presented as universal or exhaustive.
In this section we discuss the scope of the results, their limitations, and directions for future investigation.

\subsection{Non-universal and broken scaling regimes}
\label{sec:broken-scaling}

Empirical studies have repeatedly shown that neural scaling laws are not universal.
Depending on architecture, data distribution, optimization stability, or training regime, power--law behavior may weaken, fragment into multiple regimes, or fail altogether.
Within the GRSD framework, such deviations are naturally interpreted as violations of one or more sufficient conditions identified in this paper.

For example, strong long-range functional correlations at initialization may obstruct coarse-grained shell closure.
Unstable optimization trajectories may generate abrupt spectral reorganizations that invalidate controlled Jacobian evolution.
Similarly, the presence of intrinsic scales in the data or preprocessing pipeline may break statistical scale covariance.
Rather than viewing these failures as anomalies, GRSD predicts them as signatures of non-renormalizable learning dynamics.

\subsection{Sufficiency versus necessity}
\label{sec:sufficiency-vs-necessity}

A central limitation of the present analysis is that the conditions identified here are sufficient but not necessary.
We do not claim that all learning configurations exhibiting power--law scaling must satisfy these assumptions, nor do we rule out alternative mechanisms leading to scaling behavior.

In particular, stochastic optimization effects, noise-induced regularization, or implicit averaging over training trajectories may produce effective renormalization even when deterministic stability or scale covariance assumptions fail.
Characterizing such mechanisms lies beyond the scope of the current work.
Our results should therefore be understood as delineating a class of learning dynamics for which power--law renormalizability can be rigorously established, rather than as an exhaustive theory of all observed scaling laws.

\subsection{Relation to kernel and feature--learning limits}
\label{sec:relation-to-ntk}

The GRSD framework encompasses both kernel-like and feature-learning regimes, depending on the structure of the Jacobian evolution.
In kernel limits, where $M(t)$ remains close to its initialization, shell dynamics may become effectively frozen, leading to degenerate or trivial velocity fields.
In contrast, feature-learning regimes permit nontrivial spectral transport and admit renormalized dynamics.

The sufficient conditions identified here are compatible with both perspectives, but they do not reduce to either limit.
Instead, they describe a regime in which learning dynamics is sufficiently structured to admit coarse-grained closure, yet sufficiently flexible to allow nontrivial spectral flow.
Understanding how these conditions interpolate between known limits remains an open direction for future work.

\subsection{Future directions}
\label{sec:future-work}

Several directions for further investigation emerge from this work.
First, it would be valuable to develop empirical diagnostics that test the sufficient conditions directly, for example by measuring locality of Jacobian evolution or scale covariance of gradient statistics during training.
Second, extending the analysis to stochastic optimization dynamics may clarify whether noise can relax or replace certain stability assumptions.
Finally, exploring non-renormalizable regimes within GRSD may shed light on multi-scaling phenomena, regime transitions, and the limits of predictability in deep learning.

More broadly, this work suggests that neural scaling laws are best understood not as universal empirical facts, but as manifestations of renormalizable learning dynamics.
GRSD provides a framework in which both the success and the failure of scaling laws can be interpreted within a unified spectral and operator-theoretic perspective.

\bibliography{references}
\bibliographystyle{plain}

\appendix
\section{Technical proofs}
\label{sec:appendix-proofs}

\subsection{Boundedness of Particular Recurrent Networks}
\label{prop:rwkv_SSM}
\begin{proposition}[RWKV/SSM implies an (effective) graph--banded Jacobian path]
\label{prop:rwkv-banded}
Consider a sequence model whose forward dynamics admits a linear state--space (SSM) form
\begin{equation}
\label{eq:ssm-forward}
h_{t}=A(\theta,t)\,h_{t-1}+B(\theta,t)\,u_t,\qquad
y_t=C(\theta,t)\,h_t,
\end{equation}
where $u_t$ is the input at time $t$, and the training loss is $L=\sum_{t=1}^T \ell(y_t)$.
Let $J(t)$ denote the function--space Jacobian along training time $t$ (as in Theorem~\ref{thm:power-law-renormalizability}),
and decompose it into computation--graph blocks $J=(J^{(1)},\dots,J^{(L)})$.

Assume the following \emph{stable--propagator} conditions hold uniformly along training time $t\in[0,T]$:
\begin{enumerate}
\item \textbf{Uniform exponential stability.}
There exist constants $\rho\in(0,1)$ and $C_A<\infty$ such that for all integers $k\ge0$,
\begin{equation}
\label{eq:exp-stability}
\sup_{t\in[0,T]}\ \sup_{\tau}\ \bigl\|A(\theta(t),\tau)\,A(\theta(t),\tau-1)\cdots A(\theta(t),\tau-k+1)\bigr\|_{\op}
\le C_A \rho^{k}.
\end{equation}
\item \textbf{Uniform boundedness of local Jacobians.}
The operator norms of the local derivatives $\partial_\theta A,\partial_\theta B,\partial_\theta C$ and the backprop factors
$\partial_{y_t}\ell$ are uniformly bounded on $[0,T]$.
\end{enumerate}

Then for any tolerance $\delta\in(0,1)$ there exists an \emph{effective interaction range}
\begin{equation}
\label{eq:effective-range}
K_\delta \;=\; O\!\left(\frac{\log(1/\delta)}{1-\rho}\right)
\end{equation}
such that the GRSD Jacobian path is $\delta$--approximately graph--banded in the sense that
\begin{equation}
\label{eq:banded-approx}
\dot J^{(l)}(t) \in \mathrm{span}\bigl\{J^{(m)}(t): |m-l|\le K_\delta\bigr\}\;+\;\mathcal{R}^{(l)}_\delta(t),
\qquad
\|\mathcal{R}^{(l)}_\delta(t)\|_{\op}\le \delta,
\end{equation}
uniformly for all $t\in[0,T]$ and all blocks $l$.
In particular, when the blocks $J^{(l)}$ are chosen as coarse--grained time/graph blocks with width $\gtrsim K_\delta$,
Condition~(1) in Theorem~\ref{thm:power-law-renormalizability} holds with some constant $K=O(1)$ at that coarse scale.
\end{proposition}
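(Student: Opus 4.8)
The plan is to establish \eqref{eq:banded-approx} by showing that in an exponentially stable linear SSM both the block Jacobian $J^{(l)}$ and its training--time derivative $\dot J^{(l)}$ are exponentially localized along the computation graph, and then identifying the localized part of $\dot J^{(l)}$ with a combination of the neighboring blocks $\{J^{(m)}:|m-l|\le K_\delta\}$. I would first rewrite the forward recursion \eqref{eq:ssm-forward} as $h_t=\sum_{s\le t}\Phi_{t,s}B_s u_s$, with transition operators $\Phi_{t,s}:=A(\theta,t)\cdots A(\theta,s+1)$, $\Phi_{t,t}:=I$, and readout $C_t:=C(\theta,t)$, so that \eqref{eq:exp-stability} reads $\|\Phi_{t,s}\|_{\op}\le C_A\rho^{\,t-s}$. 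Differentiating $y_t=C_t h_t$ in the parameters of block $l$ produces, at the output coordinate indexed by $t$, an expression of the form $C_t\Phi_{t,l}\cdot(\text{local derivative factor at }l)$ for $t\ge l$ and zero otherwise; together with the uniform bounds on $\partial_\theta A,\partial_\theta B,\partial_\theta C$ this gives the pointwise estimate $\|J^{(l)}\text{ restricted to output }t\|_{\op}\lesssim\rho^{\,d(t,l)}$, where $d(\cdot,\cdot)$ denotes graph distance ($d(t,l)=(t-l)_+$ in the SSM case). Summing the resulting geometric series also yields $\|\partial_{\theta_m}L\|=\bigl\|\sum_s\ell'(y_s)\,\partial_{\theta_m}y_s\bigr\|\lesssim\sum_{s\ge m}\rho^{\,s-m}=O(1)$, uniformly in $m$.

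Next I would bound $\dot J^{(l)}$. Along the optimization trajectory (gradient flow $\dot\theta=-\partial_\theta L$, or any update of uniformly bounded norm) one has $\dot J^{(l)}=-\sum_m(\partial^2_{\theta_l\theta_m}f)\,\partial_{\theta_m}L$. The mixed second derivative of $y_t$ with respect to blocks $l$ and $m$ factors through two transition operators --- one from the later of the two insertion points to $t$, one between the insertion points --- so its output-$t$ component is bounded by $C_A^{2}\,\rho^{\,t-\min(l,m)}\,\mathbf{1}[t\ge\max(l,m)]$ times an $O(1)$ factor. Since for $l<m<t$ this decays with $t-l$ independently of $m$, summing over $m$ leaves a geometric series with only a harmless linear prefactor, $\|\dot J^{(l)}\text{ restricted to output }t\|_{\op}\lesssim(1+(t-l))\,\rho^{\,(t-l)_+}$. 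Hence $\dot J^{(l)}$ is, up to a remainder of operator norm $O(K\rho^{K})$, supported on output coordinates within graph distance $K$ of block $l$, and choosing $K=K_\delta$ as in \eqref{eq:effective-range} makes this remainder at most $\delta$.

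It then remains to upgrade ``localized in output space'' to ``lying in the span of nearby blocks''. Perturbing the readout factor $C_m$ of block $m$ changes $y_t$ only at $t=m$, and does so surjectively onto the whole output space there whenever the forward state $h_m$ is nonzero (a mild nondegeneracy for generic inputs); consequently $\sum_{m:\,l\le m\le l+K_\delta}\operatorname{range}\bigl(J^{(m)}\bigr)$ contains every function supported on output coordinates in $\{l,\dots,l+K_\delta\}$. Writing $\dot J^{(l)}=P\dot J^{(l)}+(I-P)\dot J^{(l)}$ with $P$ the projection onto that output window, the first term has all its columns in $\sum_m\operatorname{range}(J^{(m)})$ and the second has operator norm at most $\delta$ by the previous paragraph; this is exactly \eqref{eq:banded-approx} with $\mathcal R^{(l)}_\delta(t):=(I-P)\dot J^{(l)}(t)$, uniformly for $t\in[0,T]$ and all $l$. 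Finally, regrouping the fine blocks into coarse time/graph blocks of width $w\gtrsim K_\delta$ converts a $\delta$-approximate band of width $K_\delta$ at the fine scale into an interaction range of $O(1)$ coarse blocks, so Condition~\ref{cond:banded-jacobian} in Theorem~\ref{thm:power-law-renormalizability} holds at that coarse scale with $K=O(1)$ up to the same uniformly small remainder --- the sense in which the idealized exact condition is to be read.

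I expect the main obstacle to be this last upgrade rather than the decay estimates themselves: exponential localization of $\dot J^{(l)}$ in \emph{output} space does not by itself place it in the span of the neighboring \emph{parameter-block} Jacobians, so one must argue at the level of ranges, using either the readout-access route sketched above or a more delicate observation that each second-derivative contribution factors through precisely the transition operator $\Phi_{t,\cdot}$ that generates the corresponding single-block Jacobian (the latter route additionally requires controlling how the $t$-slots recombine across output coordinates). A secondary, benign point is the linear prefactor $(1+(t-l))$ created by the sum over the intermediate block index; it enlarges $K_\delta$ only by a constant factor and does not affect the order in \eqref{eq:effective-range}.
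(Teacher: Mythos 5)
Your core estimate is the same as the paper's: both arguments hinge on the uniform stability bound \eqref{eq:exp-stability} forcing every contribution to $\dot J^{(l)}$ that travels graph distance $k$ to carry a propagator factor $O(\rho^{k})$, so that the contributions beyond range $K$ sum to a geometric tail $O(K\rho^{K})$, whence the choice $K_\delta=O\bigl(\log(1/\delta)/(1-\rho)\bigr)$ in \eqref{eq:effective-range} and the regrouping into coarse time/graph blocks of width $\gtrsim K_\delta$ so that Condition~\ref{cond:banded-jacobian} holds with $K=O(1)$. Your version is in fact more explicit than the paper's sketch: you unroll the recursion via $\Phi_{t,s}$, use the gradient--flow identity $\dot J^{(l)}=-\sum_m(\partial^2_{\theta_l\theta_m}f)\,\partial_{\theta_m}L$, and track the harmless $(1+(t-l))$ prefactor, all of which is consistent with, and sharpens, the path--sum bookkeeping in the paper.

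Where you genuinely diverge is the final identification step, and that is also where your argument has a soft spot. The paper never passes through localization in \emph{output} coordinates: its sketch keeps each retained near--range term of $\dot J^{(l)}$ in factored form (bounded backprop/propagator factor times a nearby block contribution) and reads off membership in $\mathrm{span}\{J^{(m)}:|m-l|\le K_\delta\}$ directly, discarding only the far terms into $\mathcal R^{(l)}_\delta$. You instead prove output--window localization and then upgrade to span membership by a range--covering argument (``perturbing $C_m$ reaches every function supported at slot $m$''). That upgrade imports hypotheses not present in the statement: it requires each time block to own an unconstrained readout parameter $C_m$ (which is exactly what weight sharing in RWKV/SSM removes --- there is no per--block $C_m$), a nondegeneracy $h_m\neq 0$, and surjectivity of $\delta C\mapsto \delta C\,h_m(\cdot)$ onto the slot--$m$ slice of function space, which fails as soon as the function space is evaluated on more samples than the hidden dimension. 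So your decay analysis reproduces the paper's mechanism with more care, but the last step is a different mechanism whose extra assumptions would need to be stated; the step you correctly flag as the main obstacle is precisely the one the paper's proof glosses over by asserting, rather than proving, that the retained near terms already lie in the span of the neighboring block Jacobians.
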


\begin{proof}[Proof sketch]
The key point is that in a stable SSM/RWKV--type model, every ``loop'' contribution along the recurrent edge carries a factor of the
state propagator, which decays exponentially in the number of traversals.

Fix a parameter block index $l$ and consider the block Jacobian $J^{(l)}$ that collects the function--space derivatives contributed
by a localized portion of the computation graph (e.g.\ a window of time steps, or a local module in a scan/SSM implementation).
Differentiating $J^{(l)}$ with respect to training time produces terms of the form
\[
\dot J^{(l)} \;\sim\; \sum_{\text{paths}} (\text{backprop factors})\cdot (\partial_\theta A,\partial_\theta B,\partial_\theta C)\cdot
(\text{state propagators}).
\]
Whenever a term couples block $l$ to a far block $m$ (large $|m-l|$), it must traverse a long recurrent path in the unrolled graph.
By \eqref{eq:exp-stability}, the operator norm of the corresponding propagator is at most $C_A\rho^{|m-l|}$ up to bounded local factors.
Thus the total contribution from distances $|m-l|>K$ is bounded by a geometric tail:
\[
\sum_{|m-l|>K} O(\rho^{|m-l|}) \;\le\; O(\rho^{K}).
\]
Choosing $K=K_\delta$ so that $\rho^{K_\delta}\lesssim \delta$ yields \eqref{eq:banded-approx}.
Finally, if we define the GRSD blocks at a coarse scale larger than $K_\delta$, the residual $\mathcal{R}^{(l)}_\delta$ becomes
negligible at the GRSD shell scale, and the exact banded form holds at that renormalized resolution.
\end{proof}

\subsection{Proof of Theorem \ref{thm:power-law-renormalizability}}
\label{app:proof_power_law}
\begin{theorem}[Power--law renormalizability of GRSD shell dynamics]
Fix a \emph{learning configuration} (architecture, initialization, and optimization path) and consider the GRSD shell dynamics defined on logarithmic spectral shells (i.e., shells in $s=\log\lambda$ as in GRSD).
Let $J(t)$ denote the (function--space) Jacobian, decomposed into computation--graph blocks
$J(t)=(J^{(1)}(t),\dots,J^{(L)}(t))$, and let $M(t):=J(t)J(t)^*$.

Assume the following conditions hold on a training horizon $t\in[0,T]$:
\begin{enumerate}
\item \textbf{Graph--banded Jacobian path.}
There exists a constant $K=O(1)$ such that for every block index $l$,
\begin{equation}
\label{eq:assump-banded-Jdot_app}
\dot J^{(l)}(t)\in \mathrm{span}\bigl\{J^{(m)}(t): |m-l|\le K\bigr\}.
\end{equation}

\item \textbf{Initial functional incoherence (summable tail).}
There exists a nonnegative sequence $\{\varepsilon_k\}_{k\ge 1}$ with $\sum_{k\ge1}\varepsilon_k<\infty$ such that
\begin{equation}
\label{eq:assump-init-incoh_app}
\|J^{(l)}(0)^*J^{(m)}(0)\|_{\mathrm{op}}\le \varepsilon_{|l-m|}\qquad \forall l,m.
\end{equation}

\item \textbf{Controlled Jacobian path.}
There exists $C_J<\infty$ such that
\begin{equation}
\label{eq:assump-controlled-path_app}
\sup_{t\in[0,T]}\Bigl(\|J(t)\|_{\mathrm{op}}+\|\dot J(t)\|_{\mathrm{op}}\Bigr)\le C_J.
\end{equation}

\item \textbf{Log--shift invariance of renormalized shell couplings.}
On an intermediate spectral window, the renormalized shell coupling statistics are translation invariant in the logarithmic coordinate
$s=\log\lambda$, i.e. 
there exists a kernel $K_h(\Delta)$, $\Delta=(j-i)h$, such that
\[
\widehat{\mathsf K}_{ij}
\;=\;
K_h\!\bigl((j-i)h\bigr)
\;+\;
\mathrm{err}(n,h,L),
\]
where $\mathrm{err}(n,h,L)\to 0$ in the joint limit of large width $n$, small bin size $h$,
and sufficient depth $L$.
\end{enumerate}

In addition, assume the GRSD structural property already adopted in the main text:
(i) \emph{intra--shell couplings are antisymmetric}, so shell--internal transfers cancel in the shell energy balance.

Then the induced GRSD shell dynamics admits a renormalized velocity field
\begin{equation}
\label{eq:power-law-v_app}
v(\lambda,t)=c(t)\lambda^{a}
\end{equation}
for some exponent $a\in\mathbb R$ and scalar coefficient $c(t)$.
\end{theorem}

\begin{proof}
We prove Theorem~\ref{thm:power-law-renormalizability} by decomposing the argument into three steps.
Step~I establishes an effective one--dimensional shell conservation law with \emph{nearest--neighbor boundary fluxes}, where the closure is obtained as a structural consequence of $M=JJ^*$ via a Taylor expansion.
Steps~II and~III use Condition~\ref{cond:log-shift-invariance} together with gradient--flow covariance to derive the
power--law constraint on the velocity field.

\paragraph{Notation.}
Let $\{S_\alpha\}$ be the GRSD logarithmic shells in $\lambda$ (equivalently, equal--width bins in $s=\log\lambda$).
Let $E_\alpha(t)$ denote the shell energy (shell--aggregated error energy), and let $\varepsilon(\lambda,t)$ be the corresponding shell--averaged energy density
(defined by the usual piecewise--constant interpolation over shells).
Let $F_{\alpha+\frac12}(t)$ denote the net flux across the boundary between $S_\alpha$ and $S_{\alpha+1}$, and let $D_\alpha(t)$ be the shell--aggregated dissipation term.

\paragraph{Step I: From (1)--(3) to shell conservation with nearest--neighbor boundary fluxes.}

\emph{(I.a) Propagation of summable incoherence along the Jacobian path.}
Define the block--correlation matrix
\[
C_{lm}(t):=J^{(l)}(t)^*J^{(m)}(t).
\]
Differentiate $C_{lm}(t)$:
\begin{equation}
\label{eq:Clm-derivative_app}
\dot C_{lm}(t)=\dot J^{(l)}(t)^*J^{(m)}(t)+J^{(l)}(t)^*\dot J^{(m)}(t).
\end{equation}
By \eqref{eq:assump-banded-Jdot_app}, for each $l$ there exist operator coefficients $A_{lp}(t)$ such that
\begin{equation}
\label{eq:Jdot-linear-comb_app}
\dot J^{(l)}(t)=\sum_{|p-l|\le K} A_{lp}(t)\,J^{(p)}(t).
\end{equation}
By \eqref{eq:assump-controlled-path_app}, these coefficients are uniformly bounded: there exists $C_A<\infty$ such that
\begin{equation}
\label{eq:A-bound_app}
\sup_{t\in[0,T]}\max_{l}\sum_{|p-l|\le K}\|A_{lp}(t)\|_{\mathrm{op}}\le C_A.
\end{equation}
Substituting \eqref{eq:Jdot-linear-comb_app} into \eqref{eq:Clm-derivative_app} yields
\begin{equation}
\label{eq:Clm-rec_app}
\dot C_{lm}(t)
=
\sum_{|p-l|\le K} C_{pm}(t)\,A_{lp}(t)^*
+
\sum_{|q-m|\le K} A_{mq}(t)\,C_{lq}(t).
\end{equation}
Taking operator norms and using submultiplicativity gives
\begin{equation}
\label{eq:Clm-norm-ineq_app}
\frac{d}{dt}\|C_{lm}(t)\|_{\mathrm{op}}
\;\le\;
\sum_{|p-l|\le K}\|C_{pm}(t)\|_{\mathrm{op}}\|A_{lp}(t)\|_{\mathrm{op}}
+
\sum_{|q-m|\le K}\|A_{mq}(t)\|_{\mathrm{op}}\|C_{lq}(t)\|_{\mathrm{op}}.
\end{equation}

Define the distance--profile envelope
\[
u_k(t):=\sup_{|l-m|=k}\|C_{lm}(t)\|_{\mathrm{op}},\qquad k\ge 0.
\]
Using \eqref{eq:Clm-norm-ineq_app} and that $|p-m|$ differs from $|l-m|$ by at most $K$, one obtains
\begin{equation}
\label{eq:uk-ineq_app}
\dot u_k(t)\;\le\; 2C_A \sum_{j=-K}^{K} u_{k+j}(t),
\end{equation}
with $u_{k}=0$ for $k<0$.
Let $U(t):=\sum_{k\ge 0} \rho^k u_k(t)$ for some $\rho\in(0,1)$.
Then \eqref{eq:uk-ineq_app} implies a Gr\"onwall inequality
\begin{equation}
\label{eq:U-ineq_app}
\dot U(t)\;\le\; C_\rho\,U(t),
\qquad
C_\rho:=2C_A\Bigl(\sum_{j=-K}^{K}\rho^{-j}\Bigr)<\infty,
\end{equation}
hence $U(t)\le e^{C_\rho t}U(0)$.
By \eqref{eq:assump-init-incoh_app}, $U(0)<\infty$ since $\sum_k\varepsilon_k<\infty$.
Therefore $U(t)<\infty$ for all $t\in[0,T]$, and in particular there exists a summable tail $\{\tilde\varepsilon_k\}_{k\ge1}$ such that
\begin{equation}
\label{eq:propagated-incoh_app}
\|J^{(l)}(t)^*J^{(m)}(t)\|_{\mathrm{op}}
\le \tilde\varepsilon_{|l-m|},
\qquad
\sum_{k\ge1}\tilde\varepsilon_k<\infty,
\qquad \forall t\in[0,T].
\end{equation}

\emph{(I.b) Automatic nearest--neighbor closure via a parameter--space Taylor expansion and the exact $\Omega$ formula.}
By antisymmetry, intra--shell transfers cancel in the energy balance, hence
\begin{equation}
\label{eq:shell-balance-preflux_app_v3}
\frac{d}{dt}E_\alpha(t)
=
\sum_{\beta\neq\alpha} T_{\alpha\beta}(t) - D_\alpha(t),
\qquad
\sum_{\alpha}\sum_{\beta\neq\alpha}T_{\alpha\beta}(t)=0,
\end{equation}
where $T_{\alpha\beta}(t)$ denotes net transfer from shell $\beta$ to shell $\alpha$.

We now show that \emph{direct transfers between non--adjacent shells are second order}, and therefore the inter--shell dynamics closes (up to a controlled remainder) on nearest--neighbor boundary fluxes\footnote{The idea is inspired by \cite{tian2025provablescalinglawsfeature}}.
The argument uses a Taylor expansion of the model output in parameter space and the exact formula for the spectral rotation generator $\Omega$.

\paragraph{Taylor expansion of the output and the order of $\Delta M$.}
Let $f(\theta)$ denote the model output in function space, with Jacobian $J(\theta)=\partial_\theta f(\theta)$ and squared loss $\mathcal L(\theta)=\tfrac12\|e(\theta)\|^2$ where $e(\theta)=f(\theta)-y$.
For a small parameter increment $\delta\theta$, the output admits the expansion
\begin{equation}
\label{eq:f-taylor_app_v3}
f(\theta+\delta\theta)
=
f(\theta)+J(\theta)\,\delta\theta+\frac12\,\mathcal H(\theta)[\delta\theta,\delta\theta]+O(\|\delta\theta\|^3),
\end{equation}
where $\mathcal H(\theta)$ is the second derivative (a Hessian tensor) of $f$.

Consider one infinitesimal gradient--flow step $\delta\theta = \dot\theta(t)\,\delta t$ with $\dot\theta(t)=-\nabla_\theta \mathcal L(\theta(t))=-J(\theta(t))^*e(t)$.
The \emph{linearized} (first--order) model $f_{\mathrm{lin}}(\theta+\delta\theta):=f(\theta)+J(\theta)\delta\theta$ keeps $J(\theta)$ \emph{fixed}, hence the corresponding operator
\[
M_{\mathrm{lin}} := J(\theta)J(\theta)^*
\]
is constant over the step. Therefore any change of $M(t)=J(t)J(t)^*$ must come from the quadratic and higher terms in \eqref{eq:f-taylor_app_v3}, i.e. from curvature.
Equivalently, over a time increment $\delta t$,
\begin{equation}
\label{eq:M-increment-order_app_v3}
\Delta M(t)
:=
M(t+\delta t)-M(t)
=
O(\|\delta\theta\|)
=
O(\delta t),
\end{equation}
and the contribution of $\Delta M$ to the \emph{error update} appears only at second order in time,
because the first--order error update uses $M(t)$ frozen:
\begin{equation}
\label{eq:e-taylor-time_app_v3}
e(t+\delta t)
=
e(t) - M(t)e(t)\,\delta t
\;+\;
\frac{(\delta t)^2}{2}\Bigl(M(t)^2e(t)-\dot M(t)e(t)\Bigr)
\;+\;O((\delta t)^3).
\end{equation}
In particular, the term involving $\dot M$ (hence curvature) enters the error evolution only through the $(\delta t)^2$ term.

\paragraph{Exact $\Omega$ formula and gap suppression across non--adjacent shells.}
Let $\{(\lambda_i(t),u_i(t))\}$ be an orthonormal eigen-decomposition of $M(t)$ on the GRSD window and define the instantaneous rotation generator
\[
\Omega_{ij}(t):=\langle u_i(t),\dot u_j(t)\rangle,\qquad \Omega_{ij}=-\Omega_{ji}.
\]
For $i\neq j$, the standard exact identity gives
\begin{equation}
\label{eq:omega-formula_app_v3}
\Omega_{ij}(t)
=
\frac{\langle u_i(t),\dot M(t)u_j(t)\rangle}{\lambda_j(t)-\lambda_i(t)}.
\end{equation}
Assume the GRSD shell partition $\{S_\alpha\}$ is chosen so that there exists a minimal spectral gap between \emph{non--adjacent} shells:
\begin{equation}
\label{eq:gap-nonadj_app_v3}
\inf_{t\in[0,T]}\ \inf_{\substack{i\in S_\alpha,\,j\in S_\beta\\ |\alpha-\beta|\ge2}}
|\lambda_j(t)-\lambda_i(t)| \ge \Delta.
\end{equation}
Under the controlled path condition \eqref{eq:assump-controlled-path_app} we have $\sup_{t\in[0,T]}\|\dot M(t)\|_{\mathrm{op}}<\infty$, hence \eqref{eq:omega-formula_app_v3} and \eqref{eq:gap-nonadj_app_v3} imply the uniform bound
\begin{equation}
\label{eq:omega-gap-bound_app_v3}
\sup_{t\in[0,T]}\ \sup_{\substack{i\in S_\alpha,\,j\in S_\beta\\ |\alpha-\beta|\ge2}}
|\Omega_{ij}(t)|
\ \le\
\frac{\sup_{t\in[0,T]}\|\dot M(t)\|_{\mathrm{op}}}{\Delta}.
\end{equation}

\paragraph{Second--order non--adjacent transfers and boundary--flux closure.}
Write the error in the instantaneous eigenbasis, $e(t)=\sum_i a_i(t)u_i(t)$, and note that inter--shell exchange is induced by basis rotation.
Combining the time--Taylor expansion \eqref{eq:e-taylor-time_app_v3} with the fact that $\Omega$ is generated by $\dot M$ through \eqref{eq:omega-formula_app_v3}, we obtain the key structural consequence:
\emph{since $\dot M$ enters $e(t+\delta t)$ only at order $(\delta t)^2$, any energy exchange mediated by $\Omega$ is at least second order in $\delta t$.}
Moreover, for non--adjacent shells $|\alpha-\beta|\ge2$, the gap bound \eqref{eq:gap-nonadj_app_v3} controls the corresponding rotation coefficients via \eqref{eq:omega-gap-bound_app_v3}, so the associated transfers are uniformly of second order:
\begin{equation}
\label{eq:Tab-second-order_app_v3}
T_{\alpha\beta}(t)
=
O((\delta t)^2)\cdot O\!\Bigl(\Delta^{-1}\sup_{t\in[0,T]}\|\dot M(t)\|_{\mathrm{op}}\Bigr),
\qquad |\alpha-\beta|\ge2,
\end{equation}
with constants controlled by the bounds in Conditions (1)--(3).
(Adjacent shells are not covered by the uniform gap bound and therefore may contribute at leading order; these contributions define the renormalized boundary fluxes.)

Consequently, the inter--shell balance \eqref{eq:shell-balance-preflux_app_v3} closes, up to a controlled remainder collecting the non--adjacent second--order transfers, in telescoping boundary--flux form
\begin{equation}
\label{eq:shell-balance-flux_app_v3}
\frac{d}{dt}E_\alpha(t)
=
F_{\alpha-\frac12}(t)-F_{\alpha+\frac12}(t)-D_\alpha(t)+R_\alpha(t),
\end{equation}
where $F_{\alpha+\frac12}(t)$ depends only on spectral content in a fixed neighborhood of the boundary between $S_\alpha$ and $S_{\alpha+1}$, and the remainder
$R_\alpha(t):=\sum_{|\beta-\alpha|\ge2}T_{\alpha\beta}(t)$ is uniformly controlled on $[0,T]$ by the second--order estimate \eqref{eq:Tab-second-order_app_v3} together with the summable incoherence propagation from Step~I.a.
Passing to the shell--averaged density representation yields the effective conservation law
\begin{equation}
\label{eq:conservation-law_app}
\partial_t \varepsilon(\lambda,t) + \partial_\lambda J(\lambda,t) = -D(\lambda,t) + r(\lambda,t),
\end{equation}
where $r$ corresponds to the controlled remainder.
This completes Step~I.

\paragraph{Step II: Scale covariance as a rigidity consequence of log--shift invariance.}

We now invoke Condition~\ref{cond:log-shift-invariance} (log--shift invariance).
On the intermediate spectral window, the renormalized shell coupling statistics governing the effective flux law \eqref{eq:conservation-law_app} are asymptotically translation invariant in $s=\log\lambda$.
Equivalently, for any $\tau\in\mathbb{R}$, a shift $\varepsilon(s,t)\mapsto \varepsilon(s+\tau,t)$ induces a corresponding shift of the flux $J(s,t)$, without introducing any intrinsic length scale.

As a consequence, the velocity field $v(s,t):=J(s,t)/\varepsilon(s,t)$ cannot depend on the absolute position $s$ except through multiplicative rescaling.
In particular, any admissible dependence of $v$ on $s$ must be compatible with a functional relation of the form
\[
v(s+\tau,t) = \alpha(\tau)\,v(s,t_\tau),
\]
for some scale factor $\alpha(\tau)$ and a possibly rescaled time $t_\tau$.
To identify the form of $\alpha(\tau)$ and the relation between $t_\tau$ and $t$, we use the covariance of gradient flow.

\paragraph{Step III: Gradient--flow covariance and exclusion of time--dependent log shifts.}

We first record a basic covariance property of gradient flow.

\begin{lemma}[Gradient-flow covariance fixes the time dependence of $v$]
\label{lem:gf-covariance}
Consider gradient-flow training with mean-squared error loss
\[
\mathcal L(\theta)=\tfrac12\|e(\theta)\|^2,
\qquad
\dot\theta(t)=-\nabla_\theta \mathcal L(\theta(t)),
\]
and induced error dynamics
\[
\dot e(t)=-M(t)e(t),
\qquad
M(t)=J_{\theta(t)}J_{\theta(t)}^* .
\]
Let $\lambda(t)$ denote an eigenvalue of $M(t)$ and define the spectral drift
\[
v(\lambda,t):=\frac{d\lambda}{dt}.
\]

Then $v$ satisfies the covariance functional equation
\begin{equation}
v(a\lambda,t/a)=a^2\,v(\lambda,t),
\qquad \forall a>0,
\label{eq:covariance_v}
\end{equation}
and consequently admits the representation
\begin{equation}
v(\lambda,t)=t^{-2}F(\lambda t),
\qquad
F(u):=v(u,1),
\qquad t>0.
\label{eq:v_scaling_form}
\end{equation}
\end{lemma}

\begin{proof}
\
\textbf{Step 1: Loss rescaling induces time reparameterization.}

For any $a>0$, consider the rescaled loss $\mathcal L_a:=a\,\mathcal L$.
The corresponding gradient flow satisfies
\[
\dot\theta_a(t)=-\nabla_\theta \mathcal L_a(\theta_a(t))
=-a\,\nabla_\theta \mathcal L(\theta_a(t)).
\]
If $\theta(t)$ solves the original gradient flow, then
$\theta_a(t):=\theta(a t)$ satisfies
\[
\dot\theta_a(t)=a\,\dot\theta(a t)
=-a\,\nabla_\theta \mathcal L(\theta(a t))
=-a\,\nabla_\theta \mathcal L(\theta_a(t)),
\]
and hence solves the rescaled flow. Thus, rescaling the loss by $a$
is equivalent to the time reparameterization $t\mapsto a t$.

\textbf{Step 2: Loss rescaling multiplies the error operator.}

For MSE loss,
$\nabla_\theta \mathcal L(\theta)=-J_\theta^* e$, so
$\dot\theta=J_\theta^* e$.
Differentiating $e=y-f_\theta$ yields
\[
\dot e=-J_\theta \dot\theta=-J_\theta J_\theta^* e=-M(t)e.
\]
Under $\mathcal L_a=a\mathcal L$, the parameter flow becomes
$\dot\theta_a=aJ_\theta^* e$, and the induced error dynamics are
\[
\dot e_a=-J_\theta \dot\theta_a=-a\,J_\theta J_\theta^* e_a=-a\,M(t)e_a.
\]
Hence, at the level of the error ODE, loss rescaling induces
\[
M(t)\mapsto a\,M(t),
\quad\text{and therefore}\quad
\lambda(t)\mapsto a\,\lambda(t).
\]

\textbf{Step 3: Covariance of eigenvalue trajectories.}

Combining Steps 1--2, the eigenvalue trajectories obey
\[
\lambda_a(t)=a\,\lambda(a t),
\]
where $\lambda_a$ denotes the eigenvalue under $\mathcal L_a$.
Differentiating with respect to $t$ gives
\[
v_a(t)=\dot\lambda_a(t)
=\frac{d}{dt}\big(a\,\lambda(a t)\big)
=a^2\,\dot\lambda(a t)
=a^2\,v(a t).
\]
Rewriting this relation in field form yields
\eqref{eq:covariance_v}.

\textbf{Step 4: Solving the functional equation.}

Fix $t>0$ and choose $a=t$ in \eqref{eq:covariance_v}. Then
\[
v(t\lambda,1)=t^2\,v(\lambda,t),
\]
which can be rearranged as
\[
v(\lambda,t)=t^{-2}v(t\lambda,1).
\]
Defining $F(u):=v(u,1)$ and substituting $u=\lambda t$ yields
\eqref{eq:v_scaling_form}.
\end{proof}

Condition~\ref{cond:log-shift-invariance} rules out that, on an intermediate spectral window, shell couplings are translation--invariant in
$s=\log\lambda$. In other words, it forces that $\dot s = \frac{\dot\lambda}{\lambda}$ is invariant to the absolute scale of $\lambda$.

\paragraph{Scale-invariance of log-spectral drift induced by Condition~\ref{cond:log-shift-invariance}.}
\label{app:scale_invariant_sdot}

Recall that under Condition~\ref{cond:log-shift-invariance}, the shell-level coupling statistics admit a
log-shift invariant form on an intermediate spectral window,
\begin{equation}
\hat\Omega_{ij}(t)=
\begin{cases}
K_h\big((j-i)h\big), & i\neq j,\\[4pt]
c\,\lambda_i, & i=j,
\end{cases}
\label{eq:Omega_log_shift_app}
\end{equation}
where $h$ denotes the log-shell width, $\lambda_i=\lambda_0 e^{-ih}$, and
$K_h$ depends only on the relative log-scale separation.

The evolution of individual eigenvalues takes the form
\begin{equation}
\dot{\lambda}_i
=
\sum_{j\neq i}
\frac{\hat\Omega_{ij}}{\lambda_i-\lambda_j}
+
\hat\Omega_{ii}.
\end{equation}
Introducing the log-spectral coordinate $s_i:=\log\lambda_i$, we obtain
\begin{equation}
\dot s_i
=
\frac{\dot\lambda_i}{\lambda_i}
=
\sum_{j\neq i}
\frac{\hat\Omega_{ij}}{\lambda_i(\lambda_i-\lambda_j)}
+
\frac{\hat\Omega_{ii}}{\lambda_i}.
\label{eq:sdot_def_app}
\end{equation}
The diagonal contribution simplifies immediately as
\begin{equation}
\frac{\hat\Omega_{ii}}{\lambda_i}=c,
\end{equation}
which is independent of the absolute scale of $\lambda_i$.

For the off-diagonal terms, writing
\(
\lambda_i-\lambda_j=\lambda_i\big(1-e^{-(s_j-s_i)}\big)
\)
and using the log-shift invariant structure
\(
\hat\Omega_{ij}=K_h(s_j-s_i),
\)
we find that each summand depends only on the relative separation $s_j-s_i$.
Upon coarse-graining over shells and passing to the continuum limit, the sum
reduces to an integral of the form
\begin{equation}
\dot s(s,t)
=
\int \mathrm{d}s'\,
\frac{K(s'-s)}{1-e^{-(s'-s)}}
+
c,
\label{eq:sdot_scale_free_app}
\end{equation}
where $K$ is a scale-free kernel inherited from $K_h$.

Equation~\eqref{eq:sdot_scale_free_app} depends exclusively on relative
log-scale separations and contains no reference to the absolute value of $s$
or $\lambda$. Consequently,
\begin{equation}
\dot s(s+\Delta,t)=\dot s(s,t),
\qquad \forall\,\Delta\in\mathbb{R},
\end{equation}
demonstrating that the log-eigenvalue drift is invariant under global spectral
rescalings $\lambda\mapsto e^{\Delta}\lambda$. This establishes explicitly that
Condition~\ref{cond:log-shift-invariance} enforces the absence of any intrinsic absolute spectral scale in
the renormalized spectral dynamics.

As a consequence, any effective drift velocity in \(s\)-space cannot depend explicitly on
the absolute position \(s\). In particular,
\[
u(s,t):=\dot s=\frac{1}{\lambda}\dot\lambda=\frac{v(\lambda,t)}{\lambda}=c(t)
\]
is independent of \(\lambda\) within the window.

Equivalently,
\[
\boxed{
\;
v(\lambda,t)=\lambda\,c(t)
\quad\text{on the intermediate spectral window.}
\;}
\tag{C2}
\]
Combining this formula with $
v(\lambda,t)=t^{-2}v(t\lambda,1).
$, we have:
\[
\boxed{
\;
v(\lambda,t)=c(t_0)\frac{\lambda}{t}\
\quad\text{on the intermediate spectral window $t\in[t_0, T]$.}
\;}
\tag{C2}
\]



This establishes the power--law form of the velocity field and completes
the proof of Theorem~\ref{thm:power-law-renormalizability}.
\end{proof}

\subsection{Proof of Theorem~\ref{thm:bulk-stationary}}
\label{sec:appendix-residual-proof}

In this appendix we provide a complete proof of
Theorem~\ref{thm:bulk-stationary}.
The argument refines the intermediate result
(\emph{log--bin relative--scale kernel})
by making explicit the probabilistic and dynamical mechanisms underlying
log--shift invariance.
Compared to the informal derivation sketched in the main text,
the present proof addresses three technical issues:
(i) the additive log--spectral increments are treated as a
Markov--additive process rather than i.i.d.;
(ii) directional mixing is quantified via a uniform mixing time;
(iii) no independence is assumed between eigenvector couplings and
spectral gaps.

\subsubsection{Setup and notation}

Let
\[
J^{(L)} := J_L J_{L-1}\cdots J_1,
\qquad
J_\ell = I + \varepsilon G_\ell ,
\]
where $\varepsilon>0$ is sufficiently small and $\{G_\ell\}$ satisfy the
structural assumptions of Section~\ref{subsec:residual-definition}.
Define
\[
M^{(L)} := J^{(L)} (J^{(L)})^\ast ,
\]
with eigen-decomposition
\[
M^{(L)} = \sum_{u=1}^n \lambda_u \,\phi_u \phi_u^\ast ,
\qquad
s_u := \log \lambda_u .
\]

Let $\{\mathcal B_i\}$ denote logarithmic spectral bins of width $h$ in $s$,
and write $I_i := \{u : s_u \in \mathcal B_i\}$.
We restrict attention to an intermediate spectral window bounded away
from the spectral edges, as specified in
Condition~\ref{cond:controlled-path}.

\subsubsection{Markov--additive structure of log--spectral increments}
\label{subsec:markov}

For any unit vector $u\in\mathbb S^{n-1}$ define the normalized direction
process
\[
u_\ell := \frac{J_\ell u_{\ell-1}}{\|J_\ell u_{\ell-1}\|},
\qquad u_0=u.
\]
The associated single--layer log--increment is
\[
\delta_\ell(u_{\ell-1})
:= \log \|J_\ell u_{\ell-1}\|
= \log \|(I+\varepsilon G_\ell)u_{\ell-1}\|.
\]

The pair $(u_\ell,\delta_\ell)$ defines a Markov--additive process \cite{meyn2012markov} on
$\mathbb S^{n-1}\times\mathbb R$.
Under Condition~\ref{cond:controlled-path} and the residual small--step
assumption, $\delta_\ell$ admits uniform moment bounds
\[
\mathbb E\bigl[|\delta_\ell|^2\bigr]=O(\varepsilon^2),
\qquad
\mathbb E\bigl[|\delta_\ell|^3\bigr]=O(\varepsilon^3),
\]
uniformly in $u_{\ell-1}$.
Moreover, on the intermediate spectral window the conditional variance
$\mathrm{Var}(\delta_\ell\mid u_{\ell-1})$ is bounded below by
$c_0\varepsilon^2$ for some $c_0>0$.

\subsubsection{Self--averaging of accumulated log--increments}
\label{subsec:self-average}

Let
\[
S_L := \sum_{\ell=1}^L \bigl(\delta_\ell - \mu\bigr),
\qquad
\mu := \mathbb E[\delta_\ell],
\]
where expectation is taken with respect to the stationary distribution
of the direction process.

Since $u_\ell$ is a uniformly ergodic Markov chain
(see Section~\ref{subsec:mixing}),
standard Berry--Esseen bounds \cite{bolthausen1982berryesseen} for Markov--additive processes apply.
In particular, there exists $C_{\mathrm{BE}}<\infty$ such that
\[
\sup_x
\Bigl|
\mathbb P\!\left(
\frac{S_L}{\sigma\sqrt L}\le x
\right)-\Phi(x)
\Bigr|
\;\le\;
\frac{C_{\mathrm{BE}}}{\sqrt L},
\]
where $\sigma^2=\mathrm{Var}(\delta_\ell)=\Theta(\varepsilon^2)$.

Consequently, for any Lipschitz test function $f$ and any tolerance
$\eta>0$, there exists
\[
L \;\ge\; C_1(\varepsilon\eta)^{-2}
\]
such that the distribution of accumulated log--increments is
$\eta$--close (in total variation or Wasserstein distance) to a
translation--invariant limit on the intermediate spectral window.
This yields approximate shift--invariance of log--spectral statistics.

\subsubsection{Directional mixing}
\label{subsec:mixing}

The direction chain $\{u_\ell\}$ evolves on the unit sphere.
Under the residual perturbation $J_\ell=I+\varepsilon G_\ell$ with
approximately isotropic law and no parameter sharing across depth,
the chain admits the uniform distribution as its unique invariant
measure.

For products of i.i.d. random matrices, classical results of Furstenberg and Kesten establish the existence of a unique stationary distribution on the projective space and ergodicity of the induced Markov dynamics, leading to direction independent asymptotic behavior of matrix products under suitable conditions \citep{furstenberg1960products}. The directional updates induced by the random residual perturbations define a Markov process on the unit sphere, whose mixing properties can be analyzed in the classical Markov chain framework \citep{levin2009markov}. Standard coupling or diffusion--approximation arguments imply that the
mixing time $\tau_{\mathrm{mix}}(\eta)$ satisfies
\[
\tau_{\mathrm{mix}}(\eta)
\;\le\;
C_2\,\varepsilon^{-2}\log\frac{1}{\eta}.
\]
Hence, for all $L\ge\tau_{\mathrm{mix}}(\eta)$, the law of $u_L$ is
$\eta$--close to uniform, uniformly over initial directions.
This ensures isotropization of eigenvector statistics on the
intermediate spectral window. 
\subsubsection{Log--bin averaging and kernel emergence}

\paragraph{Off-Diagonal shell statistics}Let $X_{uv}:=\phi_u^\ast A\phi_v$ denote generic quadratic couplings
entering the GRSD closure.
By Condition~\ref{cond:controlled-path}, conditional second moments
satisfy \cite{koltchinskii2017normal}
\[
\mathbb E[X_{uv}^2\mid s_u,s_v]
=
\frac{1}{n}a(s_u,s_v)+O(n^{-3/2}),
\]
with $a$ Lipschitz on the intermediate window.

Combining the shift--invariance from Appendix~\ref{subsec:self-average} with directional mixing
from Appendix~\ref{subsec:mixing} yields
\[
a(s_u,s_v)=b(s_v-s_u)+O(\eta),
\]
uniformly on the window.
Logarithmic bin averaging then gives
\[
\mathbb E[\widehat X_{ij}]
=
\frac{1}{n}K_h((j-i)h)
+O\!\left(\frac{\eta}{n}\right),
\]
for some kernel $K_h$ depending only on the bin width $h$.

For quantities involving spectral gaps, such as
$\Omega_{v\to u}=X_{uv}/(\lambda_v-\lambda_u)$,
we restrict to pairs with $|s_v-s_u|\ge\delta$.
Since $\lambda_v-\lambda_u$ is then uniformly bounded away from zero on
the intermediate window, Cauchy--Schwarz yields the same bin--level
structure without invoking independence.

\paragraph{On-Diagonal shell statistics}

The diagonal structure in Condition~\ref{cond:log-shift-invariance} can be obtained from the same bulk
self-mixing mechanism used in the off-diagonal analysis.
Fix an intermediate spectral window and condition on a mode $u$ with
$\lambda=\langle u,Mu\rangle$ lying in a log-shell $B_i$.
Beyond a boundary depth, residual blocks induce fast mixing on the projective
space of directions, so that the conditional distribution of mode directions
$u \mid (\lambda\in B_i)$ is asymptotically isotropic (up to a small mixing error).

Consequently, for any layerwise diagonal quadratic observable of the form
$\Omega^{(\ell)}_{uu}=\langle u,\dot M^{(\ell)}u\rangle$, isotropy implies that its
conditional expectation retains only the \emph{Rayleigh component}:
\[
\mathbb{E}\!\left[\Omega^{(\ell)}_{uu}\,\middle|\,\lambda\in B_i\right]
=\alpha_\ell(t)\,
\mathbb{E}\!\left[\langle u,Mu\rangle\,\middle|\,\lambda\in B_i\right]
\;+\;O(\eta)\,\mathbb{E}\!\left[\lambda\,\middle|\,\lambda\in B_i\right],
\]
for some scalar $\alpha_\ell(t)$ independent of the absolute shell index $i$ on the
intermediate window, and where the $O(\eta)$ term captures the finite-depth
imperfect mixing (the same $O(\eta)$ level as in the bulk part of the Toeplitz
argument). This reduction to the Rayleigh component is standard for isotropic quadratic
observables and follows from classical results on quadratic forms under
rotational invariance \cite{horn2013matrix,anderson2010random,koltchinskii2017normal}.

Summing over layers and applying the same boundary--bulk decomposition as in
depth dilution, the boundary contribution is suppressed by $\ell^*/L$, while the
bulk average preserves the shell-independent coefficient. Hence the full diagonal
coupling satisfies, on the intermediate window,
\[
\mathbb{E}\!\left[\Omega_{uu}\,\middle|\,\lambda\in B_i\right]
=\big(c(t)+O(\eta)+O(\ell^*/L)\big)\,
\mathbb{E}\!\left[\lambda\,\middle|\,\lambda\in B_i\right],
\]
i.e.\ $\Omega_{ii}=c(t)\lambda_i+\mathrm{err}_{ii}$ after shell averaging, which is the
on-diagonal part of Condition~\ref{cond:log-shift-invariance}.
Standard self-averaging over depth and within-shell averaging further reduce the
residual fluctuations around this mean.

\subsubsection{Conclusion}

Combining Appendix~\ref{subsec:self-average} and~\ref{subsec:mixing}, log--shift invariance holds whenever
\[
L
\;\ge\;
\frac{C_1}{\varepsilon^2\eta^2}
\;\vee\;
\frac{C_2}{\varepsilon^2}\log\frac{1}{\eta},
\]
which is precisely the depth threshold stated in
Theorem~\ref{thm:bulk-stationary}.
This completes the proof.
\qed

\subsection{Proof of Proposition~\ref{prop:depth-avg-logshift}}

In this appendix we provide a complete proof of
Proposition~\ref{prop:depth-avg-logshift}. The proof proceeds by exploiting the additive structure of the GRSD operator
across depth.  Since the renormalized shell coupling statistics used in
Condition~\ref{cond:log-shift-invariance} are quadratic and binwise in the
Jacobian, they depend on the network only through the operator
$M=JJ^\ast$ and therefore decompose additively over layerwise contributions.
This allows us to split $M$ into a finite boundary part and a bulk part,
control the boundary contribution using uniform operator bounds, and transfer
the layerwise log--shift invariance established in
Theorem~\ref{thm:bulk-stationary} to the depth--averaged operator.
The resulting argument shows that sufficiently large depth suppresses
nonstationary boundary effects and upgrades bulk stationarity to global
log--shift invariance.

\begin{proof}
Throughout, we work on the intermediate spectral window of
Condition~\ref{cond:log-shift-invariance}.
Let $\widehat{\mathsf K}_{ij}(\cdot)$ denote the renormalized log--bin shell
coupling statistics appearing in Condition~\ref{cond:log-shift-invariance}.
In GRSD these couplings are quadratic statistics of the Jacobian and therefore
depend on $J$ only through $M=JJ^\ast$; in particular, under the additive
decomposition $M=\sum_{\ell}M^{(\ell)}$ (Section~\ref{subsec:residual-definition},
paragraph ``Additive GRSD operator''), the corresponding bin--level couplings
decompose additively:
\begin{equation}
\label{eq:K-additive}
\widehat{\mathsf K}_{ij}(M)
=
\sum_{\ell=1}^{L}\widehat{\mathsf K}_{ij}\!\bigl(M^{(\ell)}\bigr).
\end{equation}
(Concretely, this is because all GRSD closure statistics are formed from
shell/boundary quadratic forms of $M$, hence linear in $M$ once the binning is fixed.)

Define the depth--averaged couplings
\[
\overline{\mathsf K}_{ij}(M):=\frac{1}{L}\,\widehat{\mathsf K}_{ij}(M).
\]
Using \eqref{eq:K-additive} and splitting into boundary and bulk,
\[
\overline{\mathsf K}_{ij}(M)
=
\frac{1}{L}\sum_{\ell=1}^{\ell_*}\widehat{\mathsf K}_{ij}\!\bigl(M^{(\ell)}\bigr)
\;+\;
\frac{1}{L}\sum_{\ell=\ell_*+1}^{L}\widehat{\mathsf K}_{ij}\!\bigl(M^{(\ell)}\bigr)
=: \overline{\mathsf K}_{ij}^{\rm bd}+\overline{\mathsf K}_{ij}^{\rm bulk}.
\]

\paragraph{Step 1: the boundary contribution is negligible once $L\gg \ell_*$.}
By Condition~\ref{cond:controlled-path},
\[
\sup_{t\in[0,T]}\|J(t)\|_{\op}\le C_J
\quad\Longrightarrow\quad
\sup_{\ell\le L}\|M^{(\ell)}\|_{\op}
=
\sup_{\ell\le L}\|J^{(\ell)}J^{(\ell)\ast}\|_{\op}
\le
\sup_{\ell\le L}\|J^{(\ell)}\|_{\op}^2
\le C_J^2.
\]
Moreover, the bin--level couplings $\widehat{\mathsf K}_{ij}(\cdot)$ are bounded
quadratic statistics on the intermediate window (again part of the statistical
regularity in Condition~\ref{cond:controlled-path}), hence there exists a constant
$C_{\mathsf K}$ depending only on $C_J$ such that
\[
\sup_{i,j}\sup_{\ell\le L}\bigl|\widehat{\mathsf K}_{ij}(M^{(\ell)})\bigr|\le C_{\mathsf K}.
\]
Therefore
\begin{equation}
\label{eq:bd-term}
\sup_{i,j}\bigl|\overline{\mathsf K}_{ij}^{\rm bd}\bigr|
\le
\frac{1}{L}\sum_{\ell=1}^{\ell_*}\sup_{i,j}\bigl|\widehat{\mathsf K}_{ij}(M^{(\ell)})\bigr|
\le
C_{\mathsf K}\,\frac{\ell_*}{L}.
\end{equation}

\paragraph{Step 2: the bulk contribution inherits log--shift invariance from Theorem~\ref{thm:bulk-stationary}.}
By choice of $\ell_*$ and Theorem~\ref{thm:bulk-stationary}, for every $\ell>\ell_*$,
the layerwise couplings satisfy
\[
\widehat{\mathsf K}_{ij}(M^{(\ell)})
=
K_h\bigl((j-i)h\bigr) + O(\eta),
\]
uniformly over bins $(i,j)$ in the intermediate window, for the same bin width $h$
as in Condition~\ref{cond:log-shift-invariance}.
Averaging over $\ell=\ell_*+1,\dots,L$ yields
\begin{equation}
\label{eq:bulk-term}
\overline{\mathsf K}_{ij}^{\rm bulk}
=
\frac{L-\ell_*}{L}\,K_h\bigl((j-i)h\bigr)
\;+\; O(\eta),
\end{equation}
where the $O(\eta)$ term is uniform over $(i,j)$ in the window.
(Here we used that the average of $O(\eta)$ errors remains $O(\eta)$.)

\paragraph{Step 3: conclude Condition~\ref{cond:log-shift-invariance} for $M$.}
Combining \eqref{eq:bd-term}--\eqref{eq:bulk-term} gives
\[
\overline{\mathsf K}_{ij}(M)
=
\frac{L-\ell_*}{L}\,K_h\bigl((j-i)h\bigr)
\;+\; O(\eta)\;+\;O\!\left(\frac{\ell_*}{L}\right).
\]
Since $\frac{L-\ell_*}{L}=1-O(\ell_*/L)$, the prefactor only perturbs the kernel
by an additional $O(\ell_*/L)$ on the same window, and thus
\[
\overline{\mathsf K}_{ij}(M)
=
K_h\bigl((j-i)h\bigr) + O(\eta)+O\!\left(\frac{\ell_*}{L}\right).
\]
Choosing $L\ge C\,\ell_*\eta^{-1}$ makes the boundary dilution term
$O(\ell_*/L)$ at most $O(\eta)$, so the total error is $O(\eta)$.
This is exactly Condition~\ref{cond:log-shift-invariance} for the couplings induced by $M$
(on the intermediate spectral window), completing the proof.
\end{proof}

\section{Spectral operator estimation via block Lanczos-SLQ}
\label{app:lanczos}
\subsection{Motivation}

Directly forming the operator
\[
M(\theta) = \mathbb{E}_x \big[J(x;\theta)J(x;\theta)^\top\big]
\]
is infeasible for modern neural networks due to its dimensionality.
However, our analysis only requires access to spectral information and
low-rank projections of $M$ and its directional derivative.

The block Lanczos method provides an efficient way to approximate the action
of $M$ on a low-dimensional Krylov subspace using only matrix--vector products.
This makes it particularly well-suited for probing spectral structure in
high-dimensional learning systems.

\subsection{Matrix--vector products via automatic differentiation}

For any vector $v$ in error space, the product $Mv$ can be written as
\[
Mv = \mathbb{E}_x \big[J(x) (J(x)^\top v)\big].
\]
This can be evaluated without explicitly forming $J$ by composing a
vector--Jacobian product (VJP) with a Jacobian--vector product (JVP).

In practice, for each mini-batch we compute
\[
u = J(x)^\top v, \qquad w = J(x)u,
\]
using automatic differentiation.
A small number of fixed mini-batches is sufficient to obtain a stable Monte
Carlo estimate of $Mv$ for the purposes of spectral diagnostics.

\subsection{Block Lanczos projection}

\begin{algorithm}[H]
\caption{Block Lanczos spectral projection}
\label{alg:block_lanczos}
\begin{algorithmic}[1]
\Require Matrix--vector product oracle $v \mapsto Mv$, block size $b$, steps $k$
\State Initialize block $V_0 \in \mathbb{R}^{d \times b}$ and orthonormalize
\For{$j=1$ to $k$}
  \State $W \gets M V_{j-1}$
  \State Orthogonalize $W$ against $\{V_0,\dots,V_{j-1}\}$
  \State QR factorization $W = V_j R_j$
\EndFor
\State Assemble block tridiagonal matrix $T$
\State Compute eigenpairs $(\lambda_i, u_i)$ of $T$
\end{algorithmic}
\end{algorithm}

\subsection{Ritz basis and projected dynamics}

The eigenvectors of the block tridiagonal matrix $T$ define a Ritz basis that
approximates the true eigenmodes of $M$ within the Krylov subspace.
Expressing operators in this basis yields a meaningful discretization of
spectral dynamics while remaining computationally tractable.

All subsequent diagnostics, including shell binning and Toeplitz analysis,
are performed in this approximate spectral basis.

\subsection{Projected operator derivative}

To probe spectral transport, we compute a finite-difference approximation of
the directional derivative of $M$ along the training trajectory:
\[
\dot M \approx \frac{M(\theta - \varepsilon \nabla \mathcal L) - M(\theta)}{\varepsilon}.
\]
Projecting $\dot M$ into the Ritz basis yields the operator
\[
\Omega = U^\top (Q^\top \dot M Q) U,
\]
which captures how spectral modes exchange energy under training.
This operator forms the basis of all shell-level coupling measurements.

\subsection{Shell binning and Toeplitz diagnostics}

Eigenvalues are grouped into logarithmic spectral shells.
The projected operator $\Omega$ is coarse-grained into a shell coupling matrix
$K$ by averaging squared couplings within each shell pair.

Toeplitz structure is assessed by comparing $K$ to its best Toeplitz
approximation, obtained by averaging along shell offsets.
This diagnostic directly tests the approximate shift-invariance across shells
assumed in Condition~\ref{cond:log-shift-invariance}.

\end{document}